\documentclass[journal]{IEEEtran}
\pdfoutput=1
\usepackage{graphicx}
\usepackage{amsmath}
\usepackage{latexsym}
\usepackage{amssymb}
\usepackage{subfigure}
\usepackage{hyperref}
\usepackage{cite}
\usepackage{algorithm}
\usepackage{algorithmic}
\usepackage{multirow}
\usepackage{color}
\newtheorem{theorem}{Theorem}

\begin{document}
%
% paper title
\title{Discriminative Transformation Learning for Fuzzy Sparse Subspace Clustering}
\author{Zaidao Wen,~Biao Hou,~\emph{Member,~IEEE},~Qian Wu and Licheng Jiao,~\emph{Senior Member,~IEEE} % <-this % stops a space
}
\maketitle

\begin{abstract}
This paper develops a novel iterative framework for subspace clustering in a learned discriminative feature domain. This framework consists of two modules of fuzzy sparse subspace clustering and discriminative transformation learning. In the first module, fuzzy latent labels containing discriminative information and latent representations capturing the subspace structure will be simultaneously evaluated in a feature domain. Then the linear transforming operator with respect to the feature domain will be successively updated in the second module with the advantages of more discrimination, subspace structure preservation and robustness to outliers. These two modules will be alternatively carried out and both theoretical analysis and empirical evaluations will demonstrate its effectiveness and superiorities. In particular, experimental results on three benchmark databases for subspace clustering clearly illustrate that the proposed framework can achieve significant improvements than other state-of-the-art approaches in terms of clustering accuracy.

\end{abstract}

\begin{keywords}
Discriminative clustering, subspace clustering, sparse representation, discriminative transformation learning, dimensionality reduction.
\end{keywords}
% Note that keywords are not normally used for peerreview papers.

% For peer review papers, you can put extra information on the cover
% page as needed:
% \begin{center} \bfseries EDICS Category: 3-BBND \end{center}
%
% For peerreview papers, inserts a page break and creates the second title.
% Will be ignored for other modes.
\IEEEpeerreviewmaketitle

\section{Introduction}
\label{sec:intro}
\PARstart{S}{ubspace} segmentation is one of the fundamental tasks in computer vision and image processing whose objective is to recover the intrinsic subspaces, their numbers as well as dimensions from a collection of observed samples embedding in an ambient high dimensional space, and then assign each sample to the correct subspace. To implement this task, the core issue will become to cluster these samples into groups based on their underlying subspace memberships. Numerous methods have been devoted to the task of subspace clustering (SC) in recent decades, including the iterative clustering frameworks \cite{Tseng2000}, statistical methods \cite{Tipping1999}, algebraic approaches \cite{Kanatani2001Motion}, spectral clustering based frameworks \cite{Yan2006,Chen2009}, \emph{etc} and they have been successfully adapted to various practical applications such as motion segmentation, handwritten image clustering as well as face clustering.
\par Among the above mentioned methods, spectral clustering based framework becomes much more appealing due to its simplicity, in which only a suitable affinity matrix will be critically constructed to reflect the pairwise relationships among the observed samples \cite{Luxburg2007}.  Exploiting the property of linear subspace, self-expressiveness or autoregressive model can provide an effective way of building an affinity matrix for SC. It suggests that any a sample can be represented by the linear combinations of some other ones if they are lying in the subspace \cite{Elhamifar2013}. It follows that the overall latent self-representation coefficients with respect to a set of samples will capture their structured relationships so as to construct an affinity matrix. In this spirit, spectral clustering based SC becomes to compute the self-representation coefficients via solving a linear inverse problem, where a regularization can be imposed on coefficients to obtain the informative and stable solutions. Elhamifar and Vidal developed a sparse subspace clustering (SSC) framework by involving a sparsity-inducing norm for regularization \cite{Elhamifar2013}, which significantly improves the clustering performances in many applications than other approaches\cite{Tron2007}. Following this baseline framework, many subsequent researches equipped with various regularizations are gradually developed for SC \cite{VidalSubspaceclstering}, including $\ell_1$ norm \cite{Soltanolkotabi2014,Patel2013}, $\ell_2$ norm \cite{Peng2015}, Frobenius norm \cite{Lu2012}, nuclear norm \cite{Chen2014,Kang2015,Liu2013,Fang2016,Wang2017}, the combinations  \cite{Wang2013,You2016Oracle} and some structure inducing variants \cite{Li2015,Li2016,Feng2014Robust,Peng2016}, \emph{etc}.
\subsection{Motivation and Related Work}
\par According to the previous introduction, the above spectral clustering based frameworks for SC will rely on two separate phases. In the first phase, given a set of high dimensional observations, an affinity matrix will be constructed from the solution of a regularized linear inverse problem. Then we carry out the standard spectral clustering algorithm to obtain the clustering labels for these samples. However, such paradigm generally suffers from the following deficiencies. Firstly, determining the clustering labels or memberships based on the resulted representations will be fundamentally sub-optimal due to ignoring their underlying dependencies \cite{Li2016,Li2015}. In other words, the membership will essentially influence the discrimination of the representations, which is not taken into account in the aforementioned frameworks. Secondly, the intrinsic subspace structure enabling clustering will be generally violated in the high dimensional observation domain due to some realistic contaminations \cite{Qiu2014}. As a consequence, the observed samples will be generally entangled and biased from their intrinsic subspaces which aggravates the difficulties and computational complexities for clustering in that domain.
\par To address the first problem, Li and Vidal presented a unified framework for learning the affinity matrix and clustering labels simultaneously to capture their dependencies \cite{Li2015}, and they recently extend this framework to the task of subspace completion and clustering to address the problem of missing entries in observed samples \cite{Li2016}. However, they still perform SC in the primary domain without further involving the discriminative strategies to disentangle the high dimensional observations. Considering the issue of high dimensionality, a common strategy in many SC frameworks is firstly to project the samples onto a low dimensional subspace with an off-the-shelf dimensionality reduction (DR) method, \emph{e.g.}, principal component analysis (PCA) \cite{Smith2002} and random projection \cite{Bingham2001} and then carry out the clustering algorithm in this subspace \cite{Elhamifar2013}\cite{Liu2013}. Although this architecture has been widely exploited in practical situations, these DL algorithms do not concern the subspace structures among samples, which will result in a sacrificed clustering accuracy in general. To address this issue, Patel \emph{et al.} proposed a latent space sparse subspace clustering (LS3C) algorithm for simultaneously DR and SC in a united framework \cite{Patel2013}, which is further extended to a kernelized nonlinear framework (NLS3C) \cite{Patel2014Kernel,Patel2015Latent,Hofmann2008}. In their frameworks, the linear DR operator can preserve the subspace structure by considering the self-representations, but they still ignore the subspace memberships during computing the DR operator and representations. It follows that samples in the resulted latent space do not exhibit more separability to improve the clustering accuracy remarkably. To find a more discriminative latent space, Qiu and Sapiro learned a discriminative low rank transformation (LRT) by involving the estimated labels, which can encourage a maximally separated structure for inter-class subspaces and reduce the variation within the intra-class subspaces \cite{Qiu2014}. In their research, they combined LRT with a robust SSC deriving from local linear embedding (LLE) \cite{Roweis2000} for SC, which achieves significant improvements on face clustering and classification than other algorithms. However, their framework still suffers from many deficiencies, such as less robustness to outlying labels, getting stuck in trivial or pathological solutions, \emph{etc.}, which will increase the potential risk of performance collapsing.
\subsection{Main Contributions}
In this paper, we will propose a novel framework for subspace clustering in a discriminative feature subspace to address the above problems. The main contributions are summarized as following.
\begin{itemize}
  \item We develop a novel fuzzy sparse subspace clustering approach in which the self-representations and a fuzzy label matrix will be collaboratively computed.
  \item Based on the fuzzy label matrix and self-representations, a linear discriminative transformation operator will be learned, by which the subspace structure can be preserved in the feature subspace while the discrimination and robustness can be significantly improved.
  \item We propose an effective optimization scheme to solve the above two procedures alternatively, which can effectively prevent from the local optimal or trivial solutions.
\end{itemize}
The experimental results on three typical benchmarks for the applications of motion segmentation, digital handwritten clustering, face clustering all demonstrate the effectiveness of our proposed framework. In particular, the superiority of our framework is validated by sharply improving the clustering accuracy by a large margin, compared with the other state-of-the-art subspace clustering approaches.
\par The rest paper is organised as follows. In Sec. \ref{Sec:Prelimiinary}, we make a detailed review of related works to highlight our motivations. Sec. \ref{Sec:RSSSC} proposes our framework in detail and we derive the optimization scheme in Sec. \ref{Sec:OPT}. Extensive experiments are conducted in Sec. \ref{Sec:Experiments} to demonstrate its effectiveness and superiorities and Sec. \ref{sec:Conclusion} concludes this paper.

\section{Preliminaries}\label{Sec:Prelimiinary}
In this section, we will formally formulate the problem of SC from a specific probabilistic insight to highlight our motivations and simultaneously review some related works.
\subsection{Regularization Based Subspace Clustering}
Let $\mathbf{X}=\left[\mathbf{x}_1,\dots,\mathbf{x}_N\right]\in\mathbb{R}^{n\times N}$ be a collection of $n$-dimensional observations drawn from a union of $K$ independent or disjoint linear subspaces $\{\mathcal{S}_k\}_{k=1}^K$. The task of subspace clustering is labelling all observations such that those from the same subspace will have the same label. Due to lack of supervised information, the general clustering algorithms will mostly attempt to capture the underlying data structures or relationships by exploiting some prior assumptions to implement this task, \emph{e.g.}, local label consistency, manifold assumption, \emph{etc}. Essentially, these algorithms attempt to character data distribution $p(\mathbf{X})=\sum_{\mathcal{Z}}p(\mathbf{X}|\mathcal{Z})p(\mathcal{Z})$ with some unsupervised generative models, where $\mathcal{Z}$ contains the latent variables generating the data $\mathbf{X}$. Then these latent variables with various structural priors will be exploited for subsequent clustering task. In practical scenario, since this marginal distribution will be always difficult or prohibitive to access straightforwardly, some approximating strategies have to be leveraged instead by maximizing its lower bound as the left hand side of the following equation, yielding a maximum a posterior (MAP) estimation of $\mathbf{Z}$.
\begin{equation}\label{Equ:LowboundData}
 \max_{\mathbf{Z}} p(\mathbf{X|Z})p(\mathbf{Z}) \leq \sum_{\mathcal{Z}}p(\mathbf{X}|\mathcal{Z})p(\mathcal{Z})=p(\mathbf{X})
\end{equation}
Considering SC, the property of self-expressiveness or auto-regression for linear subspace becomes much appealing in recent years to model data likelihood $p(\mathbf{X|Z})$ \cite{Wright2009}, which suggests that a data point in a union of subspaces can be linearly approximated by other points \cite{Elhamifar2013}. More substantial, any sample $\mathbf{x}_i$ in above $\mathbf{X}$ will be formulated as
\begin{equation}\label{Equ:Selfexpressive}
  \mathbf{x}_i= \mathbf{X}_{\setminus i}\mathbf{z}_i+\mathbf{e}_i=\sum_{j\neq i}\mathbf{x}_j\mathbf{z}_{i}(j)+\mathbf{e}_i
\end{equation}
where $\mathbf{X}_{\setminus i}$ is the sample matrix excluding $i$-th column, $\mathbf{e}_i$ models the residual vector and $\mathbf{z}_{i}(j)$ is the $j$-th entry in the so-called self-representation coefficient vector $\mathbf{z}_i$. According to this expressive model, MAP in \eqref{Equ:LowboundData} can be generally reformulated as the following regularized minimization form.
\begin{equation}\label{Equ:RegularizedSC}
  \{\mathbf{z}_i^*\}_{i=1}^N\in\arg\min_{\mathbf{z}_i} \sum_{i=1}^N\ell_{\mathcal{X}}(\mathbf{x}_i,\mathbf{X}_{\setminus i}\mathbf{z}_i)+\alpha J(\mathbf{z}_1,\dots,\mathbf{z}_N)
\end{equation}
where $J(\cdot)$ corresponds to a prior inducing regularization function on $\mathbf{Z}$ with hyper-parameter $\alpha$ and $\ell_{\mathcal{X}}(\cdot)$ is a metric of the input domain $\mathcal{X}$ induced from the data likelihood that generally characterizes the representation error $\mathbf{e}_i$. Once the MAP solutions $\{\mathbf{z}_i\}_{i=1}^N$ are computed, structural affinities between data point $\mathbf{x}_i$ and other points can be globally encoded in $\mathbf{z}_i$, \emph{i.e.}, the magnitude $\left|\mathbf{z}_i(j)\right|$ can be regarded as the contribution of $\mathbf{x}_j$ in generating $\mathbf{x}_i$. Since this relationship is generally not symmetric as $\mathbf{z}_i(j)\neq \mathbf{z}_{j}(i)$, an affinity between $\mathbf{x}_i$ and $\mathbf{x}_j$ can be designed as $|\mathbf{z}_i(j)|+|\mathbf{z}_j(i)|$ to make it symmetric. Clustering can be subsequently implemented via normal spectral approaches based on the constructed affinity matrix, such as spectral clustering \cite{Luxburg2007}, normalized cut \cite{Shi2000}.
\par In the spirit of  \eqref{Equ:RegularizedSC}, various researches focus on elaborating different regularizations $J(\cdot)$ for capturing different structural properties of the subspaces, which have achieved significant progresses in many practical applications. In addition to $J(\cdot)$, they also attempt to select different loss functions $\ell_{\mathcal{X}}(\mathbf{x}_i,\mathbf{X}_{\setminus i}\mathbf{z}_i)$ to character different conditional data distributions for the sake of enhancing the robustness of model \cite{Chen2014,Elhamifar2013,Liu2013}. It is, however, of complexity and difficulty to choose an appropriate metric in the high dimensional domain \cite{Davis2008} so that some other variants gradually turn to searching a feature domain to overcome these shortages \cite{Patel2013,Patel2014Kernel,Hou2015}. Instead of exploiting an off-the-shelf non-parametric DR operator, a parametric transformation operator from $\mathcal{X}$ to a latent $p$-dimensional feature domain $\mathcal{F}$ denoted by $\mathcal{A}:\mathcal{X}\mapsto \mathcal{F}$ is jointly learned along with $\mathbf{Z}$, yielding the following framework.
\begin{equation}\label{Equ:LatentRegularizedSC}
  \min_{\mathbf{Z},\Theta_{\mathcal{A}}} \sum_{i=1}^N\ell_{\mathcal{F}}\left(\mathcal{A}(\mathbf{x}_i),\mathcal{A}(\mathbf{X}_{\setminus i})\mathbf{z}_i\right)+\alpha J(\mathbf{Z})+\lambda\Omega(\Theta_{\mathcal{A}})
\end{equation}
where $\Omega(\Theta_{\mathcal{A}})$ is a regularization function on transformation parameters $\Theta_{\mathcal{A}}$ and  $\lambda$ is a penalty hyper-parameter. Through this way, the subspace structure encoded in $\mathbf{Z}$ is able to be preserved in the feature domain, especially \eqref{Equ:LatentRegularizedSC} can also achieve the goal of DR if $p<n$.
\subsection{Discriminative Transformation Based Clustering}
\par Note from the above presentations that only the structure information is considered in a generative framework \eqref{Equ:LatentRegularizedSC} and it ignores the underlying influence of subspace membership. It follows that the resulted solutions will be intuitively less discriminative than the supervised or semi-supervised one due to lack of label information \cite{Wen2016,Fang2016,Wen2017,Wen2017a}. Apart from SC, another type of related algorithms termed as discriminative clustering have been developed for general data clustering task \cite{VasileiosZografos2013,Xu2004Maximum}, where the latent variable $\mathbf{q}$ containing the label membership is optimized as:
  \begin{equation}\label{Equ:DiscriminativeSC}
   \max_{\Theta_{\mathcal{A}},\mathbf{q}} p(\mathbf{X}|\mathbf{q},\Theta_{\mathcal{A}})
 \end{equation}
In this spirit, Ding and Li combined linear discriminant analysis (LDA) \cite{Balakrishnama1998} and K-means together to develop a discriminative K-means algorithm based on the criterion of Fisher \cite{Ding2007Adaptive,Ye2007}. Alternatively, in order to maximize the variance of transformed data, another prevalent tool for DR, namely PCA was combined with K-means to develop a discriminative embedded clustering framework \cite{Hou2015}. Nevertheless, the above two discriminative criterions are developed for general clustering task without further exploiting the properties of subspace so that they are normally not appropriate for SC. Accordingly, some suitable criterions tailored for subspace should be newly considered \cite{Wen2016,zhang2013low,Feng2014Robust}. To this end, Qiu and Sapiro specifically developed a novel subspace oriented regularization as following.
\begin{equation}\label{Equ:DiscrminiativeRegularization}
  \min_\mathbf{A} \Omega(\mathbf{X},\mathbf{q}|\mathbf{A})=\sum_{k=1}^K\|\mathbf{AX}^k\|_*-\|\mathbf{AX}\|_*,~\mathrm{s.t.}~\|\mathbf{A}\|_2=1
\end{equation}
where $\Theta_{\mathcal{A}}=\{\mathbf{A}\}$ is the linear matrix, $\|\cdot\|_*$ is the nuclear norm and $\mathbf{X}^k$ contains the samples in the $k$-th clustering group according to $\mathbf{q}$. They proved that this regularizer will achieve its lower bound when each pair of subspace will be orthogonal so that the transformed data $\mathbf{AX}$ will be encouraged to appear a maximum separation structure among the inter-class subspaces. Then they conduct a robust SSC method for clustering in E step, which can produce much better performances that other SSC based variants when $\mathbf{A}$ is a square transformation.
\par Due to lack of subspace structure information in $\mathbf{Z}$, the above difference of convex regularization function in \eqref{Equ:DiscrminiativeRegularization} will, however, admit many trivial solutions especially in the case of a fat matrix $\mathbf{A}$, namely $p<n$. To illustrate this issue more concretely, we will show several examples. Firstly, when each sample in $\mathbf{X}$ will reside in the null space of $\mathbf{A}$, we will have $\Omega(\mathbf{A})=0$ as $\mathbf{Ax}_i=\mathbf{0}$, in which case SC cannot be performed in that feature space. This situation can be also extended as $\mathbf{A}(\mathbf{x}_i-\mathbf{x}_j)=\mathbf{0}$ so that many transformed data will readily collapse onto a specific point, yielding an over-fitting solution \cite{Bishop2006}. Secondly, when $\mathbf{A}$ is rank defective, $\mathbf{AX}$ will likely lose information, which will potentially destroy the intrinsic subspace structures so as to degrade the SC performance. Additionally, these discriminative clustering frameworks will mostly suffer from the mis-clustered outliers in $\mathbf{q}$ as well as noise during iteration. Due to lack of structure and true label information, once many false memberships are coming across in some iteration, the following operator learning will be generally biased and give rise to a domino risk of performance collapsing. To relieve these shortages, the authors exploited an extra robust PCA (RPCA) algorithm \cite{Candes2011} during each iteration to get rid of some outliers but it will be of low efficiency and will increase the model complexity at the same time.

\section{Discriminative Transformation Learning for Fuzzy Sparse Subspace Clustering}\label{Sec:RSSSC}
\par In this section, we will present a novel framework in the hope of effectively addressing all issues mentioned above. To this end, what we actually concern should concentrate on the following model containing two latent variables:
 \begin{equation}\label{Equ:UnitiveSC}
 \begin{split}
   \max_{\mathbf{q},\mathbf{Z},\Theta_{\mathcal{A}}}p\left(\mathbf{q},\mathbf{Z},\Theta_{\mathcal{A}}|\mathbf{X}\right) \propto p\left(\mathbf{X}|\Theta_{\mathcal{A}},\mathbf{q},\mathbf{Z}\right)p\left(\mathbf{q},\mathbf{Z}\right)p(\Theta_{\mathcal{A}})
   \end{split}
 \end{equation}
To explicitly characterize \eqref{Equ:UnitiveSC}, two modules of fuzzy sparse subspace clustering (FSSC) and discriminative transformation learning (DTL) will be developed in our proposed framework. Roughly speaking, in the first module of FSSC, the latent variables of fuzzy labels as well as the self-representations are jointly optimized. Then the module of DTL will focus on learning a discriminative and structure preserved linear transformation operator, in which another local latent variables will be involved to enhance the robustness. We conclude this section by proposing a theoretical comparison analysis with the other related SC algorithms. Before we start, the flow diagram will be firstly illustrated in Fig. \ref{Fig_framework} as an overview.
\begin{figure}
  \centering
  % Requires \usepackage{graphicx}
  \includegraphics[width=0.5\textwidth]{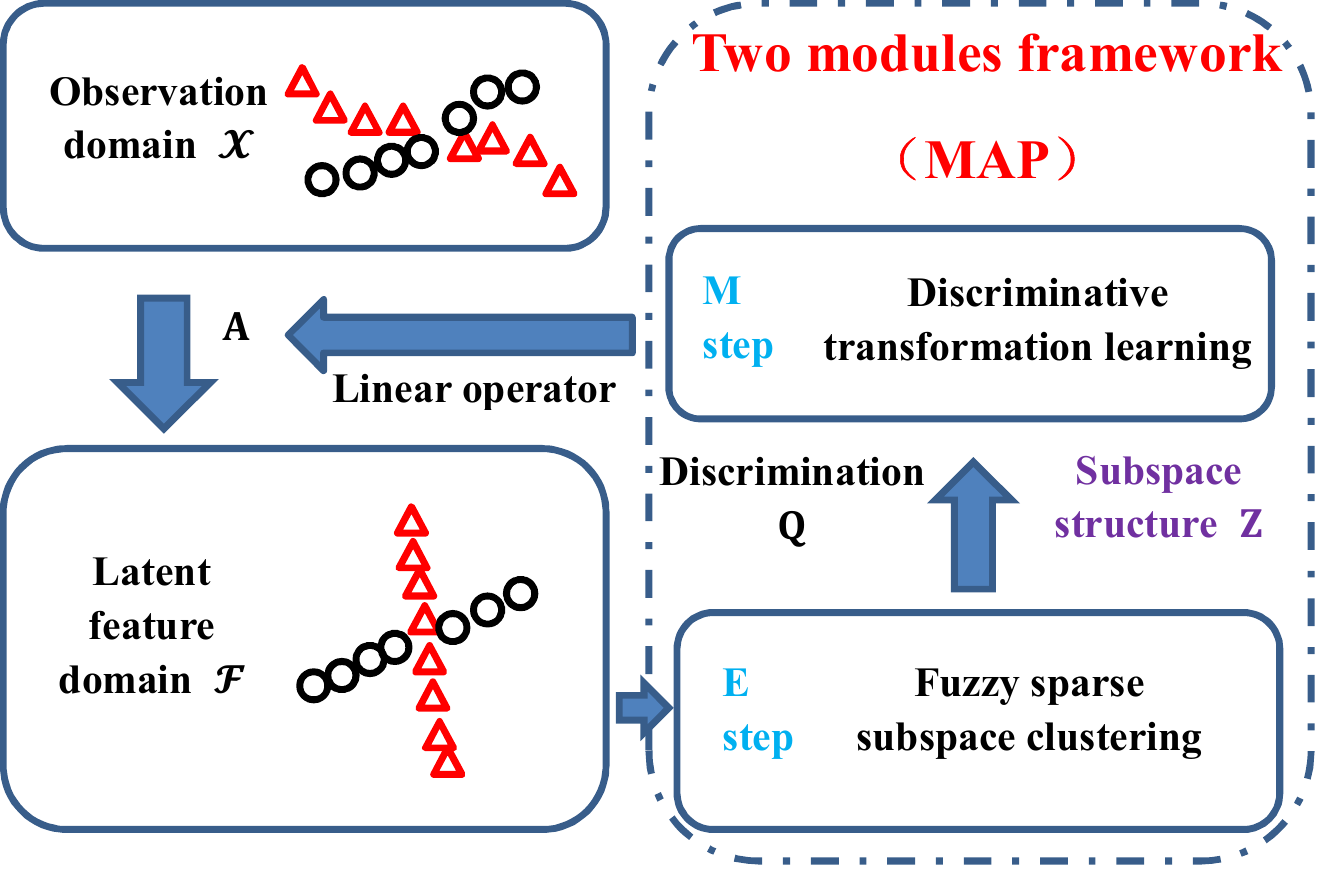}\\
  \caption{Flow diagram of the proposed two modules framework of discriminative transformation learning for fuzzy sparse subspace clustering. }\label{Fig_framework}
\end{figure}

\subsection{Fuzzy Sparse Subspace Clustering}
\par Let $\mathbf{Q}=[\mathbf{q}_1^\mathrm{T};\dots;\mathbf{q}_N^\mathrm{T}]\in\mathbb{R}_+^{N\times K}$ be the fuzzy membership matrix served as a set of latent variables, where $\mathbf{q}_i\in\mathbb{R}_+^{K}$ is a fuzzy non-negative label vector for $i$-th observation arranged as the $i$-th row vector in $\mathbf{Q}$. Considering the optimization in \eqref{Equ:UnitiveSC}, given an operator $\mathcal{A}$ with parameter $\Theta_{\mathcal{A}}$ and a set of data $\{\mathbf{x}_i\}_{i=1}^N$ to be clustered, FSSC will address the following regularized minimization problem:
\begin{equation}\label{EquMAPFSSC}
\begin{split}
  \min_{\mathbf{Q,z}_i} \sum_{i=1}^N \ell_{\mathcal{F}}\left(\mathcal{A}(\mathbf{x}_i),\mathcal{A}(\mathbf{X}_{\setminus i})\mathbf{z}_i\right) +J(\mathbf{z}_i|\mathbf{Q})\\
  \mathrm{s.t.}~\mathbf{Q}\in\{\mathbf{Q}|\mathbf{q}_i\in\Delta^{K-1},~\mathrm{rank}(\mathbf{Q})=K\}.
  \end{split}
\end{equation}
where $J(\mathbf{z}_i|\mathbf{Q})$ is a regularizer on representations depending on $\mathbf{Q}$, each $\mathbf{q}_i$ is restricted in a $K-1$-simplex $\Delta^{K-1}$ for fuzzy consideration, and a full rank constraint is necessarily required to ensure the $K$-group clustering solution. With this constraint, the probability of $\mathbf{x}_i$ belonging to the $k$-th subspace will be encoded in  $\mathbf{q}_i(k)$. To make the full rank constraint easier to handle, we make a relaxation by involving a log barrier regularization on $\mathbf{Q}$ with parameter $\tau$ to prevent its singular values from being zeros. Then \eqref{EquMAPFSSC} will be instead with:
\begin{equation}\label{EquMAPFSSC_log}
\begin{split}
  \min_{\mathbf{Q,z}_i} \sum_{i=1}^N \ell_{\mathcal{F}}\left(\mathcal{A}(\mathbf{x}_i),\mathcal{A}(\mathbf{X}_{\setminus i})\mathbf{z}_i\right) +J(\mathbf{z}_i|\mathbf{Q})-\tau\log\det(\mathbf{Q}^\mathrm{T}\mathbf{Q})\\
  \mathrm{s.t.}~\mathbf{q}_i\in\Delta^{K-1}.
  \end{split}
\end{equation}
Considering $J(\mathbf{z}_i|\mathbf{Q})$, it will be designed in the following way. According to the previous discussions, $|\mathbf{z}_i(j)|$ will reflect the similarity between $\mathbf{x}_i$ and $\mathbf{x}_j$ in terms of subspace structure. If $\mathbf{x}_i$ and $\mathbf{x}_j$ belongs to the same subspace with a high affinity, $|\mathbf{z}_i(j)|$ and $|\mathbf{z}_{j}(i)|$ will be expected to be the large values. To meet this purpose, we can develop a manifold preserving function to model $J(\cdot)$. More formally, if $\mathbf{q}_i$ is near to $\mathbf{q}_j$ in the label domain, the value of $|\mathbf{z}_{i}(j)|$ and $|\mathbf{z}_{j}(i)|$ should be also large, and vice-versa. Therefore, $J(\mathbf{z}_{i}|\mathbf{Q})$ can be designed as the following prevalent manifold preserving regularization \cite{Zheng2013,Li2016}.
\begin{equation}\label{Equ:ManifoldPreserving}
  J(\mathbf{z}_{i}|\mathbf{Q})=\sum_{j\neq i}\left(\beta\|\mathbf{q}_i-\mathbf{q}_j\|^2_2+\alpha\right)|\mathbf{z}_i(j)|
\end{equation}
where $\alpha$ and $\beta$ are two hyper-parameters respectively controlling the bias and smoothness of preservation and we let $\mathbf{z}_{i}(i)=0,\forall i$ all the way through. We can see from \eqref{Equ:ManifoldPreserving} that it is essentially a weighted $\ell_1$ norm where the weight for each entry $\mathbf{z}_i(j)$ will be controlled by the fuzzy label vectors. Combining \eqref{Equ:ManifoldPreserving} and \eqref{EquMAPFSSC_log}, the optimization for FSSC can be finally summarized as
\begin{equation}\label{EquMAPFSSC_final}
\begin{split}
  \min_{\mathbf{Q,z}_i} \sum_{i=1}^N \ell_{\mathcal{F}}\left(\mathcal{A}(\mathbf{x}_i),\mathcal{A}(\mathbf{X}_{\setminus i})\mathbf{z}_i\right)-\tau\log\det(\mathbf{Q}^\mathrm{T}\mathbf{Q})\\
    +\sum_{i=1}^N\sum_{j\neq i}\left(\beta\|\mathbf{q}_i-\mathbf{q}_j\|^2_2+\alpha\right)|\mathbf{z}_i(j)|,\\
    \mathrm{s.t.}~\mathbf{q}_i\in\Delta^{K-1},~\mathbf{z}_i(i)=0.
  \end{split}
\end{equation}
It can be observed from \eqref{EquMAPFSSC_final}, our fuzzy scheme is specifically tailored for the task of SC which is different from the general fuzzy clustering methods \cite{Chang2017}.
\subsection{Discriminative Transformation Learning}
With the learned the fuzzy label matrix $\mathbf{Q}$ as well as the self-representation vectors $\{\mathbf{z}_i\}_{i=1}^N$, we will now address the issue of learning a discriminative and structure preserving transformation operator. To make the model simplicity, we restrict our focus on learning a linear operator $\Theta_{\mathcal{A}}=\{\mathbf{A}\in\mathbb{R}^{p\times n},~p\leq n\}$. Towards this end, we will address the rest problem in \eqref{Equ:UnitiveSC} with respect to $\mathbf{A}$,
 \begin{equation}\label{Equ:MLE_A}
 \begin{split}
   \max_{\mathbf{A}}p\left(\mathbf{X}|\mathbf{Q},\mathbf{Z},\mathbf{A}\right)p(\mathbf{A})&=p(\mathbf{A})\sum_{\mathbf{F}} p(\mathbf{X|F,Z,A})p(\mathbf{F}|\mathbf{Q})\\
   \end{split}
\end{equation}
where we moreover involve a set of latent feature vectors in $\mathbf{F}$ for each sample for the sake of improving the stability and robustness \cite{Bishop2006}. However, dealing with the right hand side of \eqref{Equ:MLE_A} will be difficult. Instead, we will make a relaxation once more by maximizing its lower bound given by:
 \begin{equation}\label{Equ:MLE_AF}
 \begin{split}
   \max_{\mathbf{A},\mathbf{F}} p(\mathbf{A}) p(\mathbf{X|F,Z,A})p(\mathbf{F}|\mathbf{Q})\\
   \end{split}
\end{equation}
which can be further reformulated as the following regularized optimization.
\begin{equation}\label{Eq:DTL_final}
\begin{split}
    \min_{\mathbf{A},\mathbf{F}} \sum_{i=1}^N\left(\frac{\lambda}{2}\|\mathbf{Ax}_i-\mathbf{f}_i\|_2^2+ \ell_{\mathcal{F}}\left(\mathcal{A}(\mathbf{x}_i),\mathcal{A}(\mathbf{X}_{\setminus i})\mathbf{z}_i\right)\right)
    \\+ \left(\sum_{k=1}^K\|\mathbf{F} \mathrm{diag}(\mathbf{Q}_k)\|_*-\|\mathbf{F}\|_*\right)-\tau_1\log\det(\mathbf{AA}^\mathrm{T})
    \end{split}
\end{equation}
where a log barrier prior inducing regularization term is imposed to model $p(\mathbf{A})$ for the sake of preserving the information and overcoming the rank defective shortage in conventional LRT \eqref{Equ:DiscrminiativeRegularization}, $\tau_1$ and $\lambda$ are two regularization hyper-parameters. Comparing \eqref{Eq:DTL_final} with \eqref{Equ:DiscrminiativeRegularization}, our above formulation has a clear and more convinced interpretation of enhancing the robustness from the following three perspectives. Firstly, we impose a subspace oriented discriminative regularization on the latent features $\mathbf{F}$ aiming to find a set of regularized points around the projections $\{\mathbf{Ax}_i\}_{i=1}^N$ while regularization in \eqref{Equ:DiscrminiativeRegularization} will be directly imposed on these projections. Consequently, influence of mis-clustered outliers or noise on $\mathbf{A}$ will be relieved in \eqref{Eq:DTL_final} and more benefits of such type of regularization have been revealed in \cite{Wen2017a}. Furthermore, because of taking into account of $\mathbf{z}_i$, $\mathbf{A}$ will be able to preserve the subspace structure in the feature domain. Finally, we involve a soft membership of fuzzy label matrix in the regularizer instead of the previous hard grouping, which will actually weight each projection with the probability. With this setting, its robustness to outliers will be empirically improved than \eqref{Equ:DiscrminiativeRegularization} without need extra algorithms such as RPCA and its discriminability will be guaranteed by the following theorem.
\begin{theorem}
Let $\mathbf{Q}\in\mathbb{R}_+^{N\times K}$ be a fuzzy label matrix defined as aforementioned and $\mathbf{Q}_k$ be its $k$-th column. Then we have
\begin{equation}\label{Eq:LowboundDiscriminative}
  \sum_{k=1}^K\left\|\mathbf{F} \mathrm{diag}(\mathbf{Q}_k)\right\|_*-\left\|\mathbf{F}\right\|_*\geq 0
\end{equation}
when $\mathbf{Q}$ is a strict binary full rank matrix and the column spaces of $\mathbf{F}\mathrm{diag}(\mathbf{Q}_k),~k=1,\dots,K$ are orthogonal to each other, \eqref{Eq:LowboundDiscriminative} can reach its lower bound.
\end{theorem}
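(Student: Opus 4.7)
The plan is to reduce the inequality to a single application of the triangle inequality for the nuclear norm, once the fuzzy simplex constraint is exploited to decompose $\mathbf{F}$ itself as a sum of the weighted matrices.

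First, I would note the key algebraic identity that ties the fuzzy constraint to the $K$ weighted matrices. Since each $\mathbf{q}_i\in\Delta^{K-1}$, we have $\sum_{k=1}^K\mathbf{q}_i(k)=1$ for every $i$, which is equivalent to $\sum_{k=1}^K\mathbf{Q}_k=\mathbf{1}_N$ and hence $\sum_{k=1}^K\mathrm{diag}(\mathbf{Q}_k)=\mathbf{I}_N$. Multiplying by $\mathbf{F}$ on the left yields
\begin{equation*}
\mathbf{F}=\sum_{k=1}^K\mathbf{F}\,\mathrm{diag}(\mathbf{Q}_k).
\end{equation*}
Applying the nuclear norm (which is a matrix norm and therefore obeys the triangle inequality) gives $\|\mathbf{F}\|_*\leq\sum_{k=1}^K\|\mathbf{F}\,\mathrm{diag}(\mathbf{Q}_k)\|_*$, which is precisely \eqref{Eq:LowboundDiscriminative}. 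This handles the inequality in one line.

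Second, for the equality case I would invoke the standard block-SVD characterization: if two matrices $\mathbf{A},\mathbf{B}$ have mutually orthogonal column spaces and mutually orthogonal row spaces, then stacking their SVDs produces an SVD of $\mathbf{A}+\mathbf{B}$ whose singular values are the concatenation of those of $\mathbf{A}$ and $\mathbf{B}$, so $\|\mathbf{A}+\mathbf{B}\|_*=\|\mathbf{A}\|_*+\|\mathbf{B}\|_*$. I would apply this inductively to the $K$ summands $\mathbf{F}\,\mathrm{diag}(\mathbf{Q}_k)$. Orthogonality of column spaces is assumed. Orthogonality of row spaces comes for free from the binary assumption on $\mathbf{Q}$: when each $\mathbf{q}_i$ is a standard basis vector, the supports of the diagonal matrices $\mathrm{diag}(\mathbf{Q}_k)$ are disjoint subsets of $\{1,\dots,N\}$, so the nonzero columns of $\mathbf{F}\,\mathrm{diag}(\mathbf{Q}_k)$ occupy disjoint coordinate axes and therefore have pairwise orthogonal row spaces. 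Combining the two orthogonalities, the inductive block-SVD argument collapses the triangle inequality to equality.

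The only substantive step is the block-SVD lemma used for the equality case; everything else is bookkeeping driven by the simplex constraint. I would state that lemma explicitly (perhaps as a one-line remark with a reference to any standard treatment of unitarily invariant norms) rather than reprove it, since the main content of the theorem is the first inequality, which is immediate from the decomposition of $\mathbf{I}_N$ induced by the fuzzy simplex constraint. The full rank condition on $\mathbf{Q}$ plays no role in the inequality itself and only ensures non-degeneracy in the equality scenario, so I would mention it but not use it in the calculation.
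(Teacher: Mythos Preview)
Your proposal is correct and follows essentially the same approach as the paper: the inequality comes from $\sum_k\mathrm{diag}(\mathbf{Q}_k)=\mathbf{I}$ together with the triangle inequality for the nuclear norm, exactly as the authors do. For the equality case the paper simply invokes Theorem~2 of \cite{Qiu2014}, whereas you spell out the block-SVD argument (disjoint row supports from binary $\mathbf{Q}$ plus assumed column-space orthogonality); this is a more self-contained presentation of the same underlying fact, not a different route.
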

\begin{proof}
As $\mathbf{Q}$ is the fuzzy label matrix, we have $\sum_{k}\mathrm{diag}(\mathbf{Q}_k)=\mathbf{I}$ and $\mathbf{I}$ denotes by the identity matrix. It follows that
\begin{equation}\label{Equ:proof}
  \|\mathbf{F}\|_*=\left\|\mathbf{F}\sum_{k=1}^K \mathrm{diag}(\mathbf{Q}_k)\right\|_*\leq \sum_{k=1}^K\left\|\mathbf{F}\mathrm{diag}(\mathbf{Q}_k)\right\|_*
\end{equation}
Then according to Theorem 2 in \cite{Qiu2014}, we can conclude that its lower bound can be achieved when the column spaces of $\mathbf{F}\mathrm{diag}(\mathbf{Q}_k)$ are orthogonal to each other, which could only happen when $\mathbf{Q}$ is a binary label matrix.
\end{proof}

\subsection{Framework Analysis}
\par In the previous subsections, we have implemented a complete two-modules framework of DTL-FSSC. This subsection will present a deep analysis to highlight its superiorities and the connections as well as distinctions with the other state-of-the-art SC methods.
\par The proposed two modules framework of DTL-FSSC can be regarded as an Expectation-Maximization like scheme for computing the model parameter $\Theta_{\mathcal{A}}$ containing two latent variables, namely $\mathbf{Z}$ and $\mathbf{Q}$. The module of FSSC, corresponding to the E-step, essentially attempts to evaluate the posterior distribution $p(\mathbf{Z},\mathbf{Q}|\mathbf{X},\Theta_{\mathcal{A}})$ with the fixed $\Theta_{\mathcal{A}}$. In the previous related frameworks, e.g., Low rank representation (LRR) \cite{Liu2013}, SSC \cite{Elhamifar2013} and their numerous subsequent variants, only a marginal distribution $p(\mathbf{Z}|\mathbf{X},\Theta_{\mathcal{A}})=\sum_{\mathbf{Q}}p(\mathbf{Z},\mathbf{Q}|\mathbf{X},\Theta_{\mathcal{A}})$ is computed. More specifically, FSSC will involve a conditional distribution $p(\mathbf{Z|Q})$ while others will concentrate on $p(\mathbf{Z})$. Note from this observation apparently that FSSC will be a specific instance of SSC to enhance the discrimination of $\mathbf{Z}$ by conditioning it on the latent label $\mathbf{Q}$ and it can reasonably yield a better clustering performance. After estimating the posterior of $\mathbf{Z}$ and $\mathbf{Q}$, M-step of DTL will update the parameter $\Theta_{\mathcal{A}}$ to learn a transformation $\mathcal{A}:\mathcal{X}\rightarrow \mathcal{F}$ so that clustering in the transformed space can be more readily to realize. In LRT \cite{Qiu2014} and LS3C \cite{Patel2015Latent}, they attempt to deal with the likelihoods $p(\mathbf{X}|\mathbf{Q},\Theta_{\mathcal{A}})$ and $p(\mathbf{X}|\mathbf{Z},\Theta_{\mathcal{A}})$, respectively, and thus the subspace structure or discrimination will be omitted in these two frameworks. As a result, the resulted operator will be heavily influenced by mis-clustered $\mathbf{Q}$ or less discriminative. On the contrary, in DTL-FSSC, we impose a prior of $\mathbf{Q}$ that each label vector should be constrained in a simplex, which will alleviate the influence of some outliers. More importantly, because of the involved $\mathbf{Z}$ in DTL-FSSC, the resulted operator is able to preserve the subspace structure so that the above mentioned trivial solutions can be avoided.

\section{Optimization and Discussion}\label{Sec:OPT}
\par In this section, a detailed optimization scheme will be firstly presented to solve the proposed DTL-FSSC framework, in which $\mathbf{A}$, $\{\mathbf{z}_i\}_{i=1}^N$ and $\mathbf{Q}$ will be regarded as the communication variables for information exchange between two modules. Then the computational complexities of the proposed methods for each step will be analyzed. We conclude this section by some remarks to declare some detailed implementations.
\subsection{Fuzzy Sparse Subspace Clustering}
\par Given a transformation operator $\mathbf{A}$, the problem of FSSC in the corresponding feature domain will be given by
\begin{equation}\label{Equ:FSSC}
\begin{split}
  \min_{\mathbf{Q,z_i}} \sum_{i=1}^N \frac{1}{2}\|\mathbf{\widetilde{x}}_i-\mathbf{\widetilde{X}}_{\setminus i}\mathbf{z}_i\|_2^2 +\sum_{j\neq i}\left(\beta\|\mathbf{q}_i-\mathbf{q}_j\|^2_2+\alpha\right)\left|\mathbf{z}_i(j)\right|
  \\ -\tau\log\det(\mathbf{Q}^\mathrm{T}\mathbf{Q})~\mathrm{s.t.}~\mathbf{q}_i\in\Delta^{K-1}.
  \end{split}
\end{equation}
where $\mathbf{\widetilde{x}}_i=\mathbf{Ax}_i$ is the projection vector and the loss function in feature domain is particularly considered as the square $\ell_2$ norm for the sake of convenience. This optimization is a typical bi-convex problem and we alternatively deal with one variable with the other one fixed.
\subsubsection{\textbf{Update Representations} $\mathbf{z}_i$}
Considering the problem with respect to $\mathbf{z}_i$, it can be rewritten as the following independent weighted $\ell_1$ minimization
\begin{equation}\label{Equ:FSSC_Z}
   \min_{\mathbf{z}_i}\frac{1}{2}\|\mathbf{\widetilde{x}}_i-\mathbf{\widetilde{X}}_{\setminus i}\mathbf{z}_i\|_2^2 +\sum_{j\neq i}\left(\beta\|\mathbf{q}_i-\mathbf{q}_j\|^2_2+\alpha\right)|\mathbf{z}_i(j)|
\end{equation}
which can be solved by many existing solvers such as fast iterative shrinkage threshold algorithm (FISTA) \cite{Beck2009} and thus we will not give a detailed derivation for simplicity.
\subsubsection{\textbf{Update Fuzzy Label Matrix} $\mathbf{Q}$}
The problem with respect to $\mathbf{Q}$ can be reformulated as the following constraint smooth convex optimization which can be typically solved by projected gradient method or Frank-Wolfe of conditional gradient method \cite{Jaggi2013Revisiting}.
\begin{equation}\label{Equ:FSSC_Q}
  \min_{\mathbf{Q}} \mathrm{Tr}(\mathbf{Q}^\mathrm{T} \mathbf{L}_Z\mathbf{Q})-\widetilde{\tau}\log\det(\mathbf{Q}^\mathrm{T}\mathbf{Q})~\mathrm{s.t.}~\mathbf{q}_i\in\Delta^{K-1}
\end{equation}
where $\widetilde{\tau}=\frac{\tau}{\beta}$, $\mathbf{L}_Z$ is the graph Laplacian constructed from $\mathbf{Z}$. More concretely, $\mathbf{L}_Z=\mathbf{D}_Z-\mathbf{W}_Z$ and $\mathbf{D}_Z$ is a diagonal matrix with entries $\mathbf{D}_Z(j,j)=\sum_{i}\mathbf{W}_{i,j}$ and $\mathbf{W}_{Z}=\frac{1}{2}(|\mathbf{Z}|+|\mathbf{Z}^\mathrm{T}|)$. In this paper, we will adopt the alternating direction method of multiplier (ADMM) to solve this problem efficiently by introducing an auxiliary variable $\widehat{\mathbf{Q}}$ for $\mathbf{Q}$ \cite{Boyd2011}. Then we have the following iterative scheme
\begin{equation}\label{SolveQ}\small
  \begin{cases}
  \min_\mathbf{Q} \mathrm{Tr}(\mathbf{Q}^\mathrm{T} \mathbf{L}_Z\mathbf{Q})-\widetilde{\tau}\log\det(\mathbf{Q}^\mathrm{T}\mathbf{Q})+\frac{\rho}{2}\|\mathbf{Q}-\mathbf{\widehat{Q}}^{t}+\mathbf{\Lambda}^{t}/\rho\|_\mathrm{F}^2\\
  \min_{\mathbf{\widehat{Q}}} \|\mathbf{Q}^{t+1}-\mathbf{\widehat{Q}}+\mathbf{\Lambda}/\rho\|_\mathrm{F}^2~\mathrm{s.t.}~\mathbf{\widehat{q}}_i\in\Delta^{K-1}\\
  \mathbf{\Lambda}^{t+1}\gets \mathbf{\Lambda}^{t}+\rho(\mathbf{Q}^{t+1}-\mathbf{\widehat{Q}}^{t+1})
\end{cases}
\end{equation}
where the superscript $t$ or $t+1$ denotes the iteration counts in the optimization of $\mathbf{Q}$, $\mathbf{\Lambda}$ is the Lagrangian multiplier and $\rho>0$ is the penalty parameter.  We will next show that both of the subproblems in \eqref{SolveQ} admit the closed form solutions.
\par Firstly, we reformulate the subproblem with respect to $\mathbf{Q}$ as
\begin{equation}\label{Opt:subQ}
  \min_\mathbf{Q} \mathrm{Tr}\left(\mathbf{Q}^\mathrm{T} (\mathbf{L}_Z+\frac{\rho}{2}\mathbf{I})\mathbf{Q}\right)-\widetilde{\tau}\log\det(\mathbf{Q}^\mathrm{T}\mathbf{Q})-\rho \mathrm{Tr}(\mathbf{Q}^\mathrm{T}\widetilde{\mathbf{\Lambda}})
\end{equation}
where $\widetilde{\mathbf{\Lambda}}=\mathbf{\widehat{Q}}-\mathbf{\Lambda}/\rho$. Since any a graph Laplacian is a semi-definite matrix, $\mathbf{L}_Z+\frac{\rho}{2}\mathbf{I}$ will be a positive definite matrix admitting Cholesky factorization as $\mathbf{L}_Z+\frac{\rho}{2}\mathbf{I}=\mathbf{G}^\mathrm{T}\mathbf{G}$. Let $\mathbf{Q}=\mathbf{G}^{-1}\mathbf{B}$, \eqref{Opt:subQ} will be equivalent to
\begin{equation}\label{Opt:subQ1}
  \min_\mathbf{B} \mathrm{Tr}(\mathbf{B}^\mathrm{T}\mathbf{B})-\tau\log\det(\mathbf{B}^\mathrm{T}\mathbf{B})-\rho \mathrm{Tr}(\mathbf{B}^\mathrm{T}\mathbf{G}^\mathrm{-T}\widetilde{\mathbf{\Lambda}})+\mathrm{const}
\end{equation}
If we denote the full singular value decomposition (SVD) of $\mathbf{B}$ and $\mathbf{G}^\mathrm{-T}\widetilde{\mathbf{\Lambda}}$ by $\mathbf{B}=\mathbf{U}\mathbf{\Sigma} \mathbf{V}^\mathrm{T}$ and $\mathbf{G}^\mathrm{-T}\widetilde{\mathbf{\Lambda}}=\mathbf{U}_1\mathbf{\Sigma}_1\mathbf{V}_1^\mathrm{T}$, respectively, \eqref{Opt:subQ1} can be further solved with the following.
\begin{equation}\label{Opt:subQ2}
  \min_\mathbf{\Sigma} \mathrm{Tr}(\mathbf{\Sigma}^\mathrm{T}\mathbf{\Sigma})-\widetilde{\tau}\log\det(\mathbf{\Sigma}^\mathrm{T}\mathbf{\Sigma})-\rho \mathrm{Tr}(\mathbf{V}\mathbf{\Sigma}^\mathrm{T} \mathbf{U}^\mathrm{T}\mathbf{U}_1\mathbf{\Sigma}_1\mathbf{V}_1^\mathrm{T})
\end{equation}
As $\mathrm{Tr}(\mathbf{V}\mathbf{\Sigma}^\mathrm{T} \mathbf{U}^\mathrm{T}\mathbf{U}_1\mathbf{\Sigma}_1\mathbf{V}_1^\mathrm{T})\leq \mathrm{Tr}(\mathbf{\Sigma}^\mathrm{T}\mathbf{\Sigma}_1)$ with the upper bound attaining in the case of $\mathbf{U}=\mathbf{U}_1$ and $\mathbf{V}=\mathbf{V}_1$ \cite{Mirsky1959}, a lower bound of \eqref{Opt:subQ2} can be obtained by solving
\begin{equation}\label{Opt:subQ3}
\begin{split}
  &\min_\mathbf{\mathbf{\Sigma}} \mathrm{Tr}(\mathbf{\Sigma}^\mathrm{T}\mathbf{\Sigma})-\widetilde{\tau}\log\det(\mathbf{\Sigma}^\mathrm{T}\mathbf{\Sigma})-\rho \mathrm{Tr}(\mathbf{\Sigma}^\mathrm{T} \mathbf{\Sigma}_1)\\
  =&\min_{\sigma_i} \sum_{i}\left(\sigma_i^2-\rho\sigma_i s_i-2\widetilde{\tau}\log \sigma_i\right)
  \end{split}
\end{equation}
where $\sigma_i$ and $s_i$ are the $i$-th singular value in $\mathbf{\Sigma}$ and $\mathbf{\Sigma}_1$, respectively. Therefore, \eqref{Opt:subQ3} admits a closed form solution for each $\sigma_i$ by setting its derivation as zero and leaving out the negative solution, which is given by
 \begin{equation}\label{Sol:subQ3}
  \sigma_i^*=\frac{1}{4}(\rho s_i+\sqrt{\rho^2s_i^2+16\widetilde{\tau}})
\end{equation}
The optimal solution to \eqref{Opt:subQ} can be summarized as $\mathbf{Q}^{t+1}=\mathbf{G}^{-1}\mathbf{U}_1\mathbf{\Sigma}^* \mathbf{V}_1^\mathrm{T}$, where $\sigma^*_i$ will be the $i$-th singular value in $\mathbf{\Sigma}^*$.
\par Next, the optimal solution of $\widehat{\mathbf{Q}}$ is simply the projection onto the simplex as
\begin{equation}\label{Sol:Qhat}
  \mathbf{\widehat{Q}}^{k+1}=\max(\mathbf{q}_i+\mathbf{\Lambda}_i/\rho-\nu_i \mathbf{1},\mathbf{0}),
\end{equation}
where $\nu_i$ is a scalar computed by bisection search such that $\mathbf{1}^\mathrm{T}\mathbf{q}_i^{k+1}={1}$ \cite{Parikh2014} and $\mathbf{1}$, $\mathbf{0}$ stand for the full 1 and 0 vectors, respectively.

\subsection{Discriminative Transformation Learning}
For the module of DTL, we will rely on the previous computed $\mathbf{Z}$ and $\mathbf{Q}$ and solve the following optimization in an alternative way.
\begin{equation}\label{Equ:DTL}
\begin{split}
    \min_{\mathbf{A,F}} \sum_{i=1}^N\left(\frac{\lambda}{2}\|\mathbf{Ax}_i-\mathbf{f}_i\|_2^2+\frac{1}{2}\|\mathbf{Ax}_i-\mathbf{Ax}_i\mathbf{z}_i\|_2^2\right)
    \\+\left(\sum_{k=1}^K\|\mathbf{F} \mathrm{diag}(\mathbf{Q}_k)\|_*-\|\mathbf{F}\|_*\right)-\tau_1\log\det(\mathbf{AA}^\mathrm{T})
  \end{split}
\end{equation}
\subsubsection{\textbf{Update Latent Features} $\mathbf{F}$}
We will start from the subproblem with respect to $\mathbf{F}$ which is summarized as
\begin{equation}\label{Equ:F}
  \min_{\mathbf{F}} \frac{\lambda}{2}\|\mathbf{AX-F}\|_\mathrm{F}^2+\left(\sum_{k=1}^K\|\mathbf{F} \mathrm{diag}(\mathbf{Q}_k)\|_*-\|\mathbf{F}\|_*\right).
\end{equation}
Since the regularization function in \eqref{Equ:F} is the difference of convex regularizer, the final output will heavily depend on the initialization of $\mathbf{F}$. To produce a better and reasonable initialization, we will reconsider the behaviour of this regularization. Note from the first term that $\|\mathbf{F} \mathrm{diag}(\mathbf{Q}_k)\|_*$ is actually a low rank promoting regularizer encouraging the intra-class similarities in a fuzzy way \cite{Wen2016}. On the contrary, $-\|\mathbf{F}\|_*$ will help to increase the inter-class dissimilarities \cite{Qiu2014}. According to this interpretation, our plan of searching a better $\mathbf{F}$ will be heuristically described as following. Since the first term is a convex regularizer, we will firstly solve the following problem of low rank matrix approximation with many solvers to obtain a global solution $\widetilde{\mathbf{F}}$.
 \begin{equation}\label{opt:Finit}
  \widetilde{\mathbf{F}}=\arg\min_{\mathbf{F}} f(\mathbf{F})\doteq\frac{\lambda}{2}\|\mathbf{AX-F}\|_\mathrm{F}^2+\sum_{k=1}^K\|\mathbf{F} \mathrm{diag}(\mathbf{Q}_k)\|_*
\end{equation}
According to our interpretation, $\widetilde{\mathbf{F}}$ contains the feature vectors with high intra-class similarities with respect to $\mathbf{Q}$. Then, we will solve the primary difference of convex problem with $\widetilde{\mathbf{F}}$ as the starting point.
  \begin{equation}\label{opt:Ffinal}
 \min_{\mathbf{F}} f(\mathbf{F})-\|\mathbf{F}\|_*
\end{equation}
As $\mathbf{\widetilde{F}}$ is the global minimizer of $f(\mathbf{F})$, if the value of \eqref{opt:Ffinal} can be decreased in a monotonous way, we may find an optimization path that gradually increases the value of $\|\mathbf{F}\|_*$, namely increase the dissimilarities of inter-class features. With this strategy, the intra-class similarities may be preserved as the inter-class variance increased. To meet this requirement, we will exploit the following subgradient descent scheme in a batch mode way starting from $\widetilde{\mathbf{F}}$ such that each step will effectively decrease \eqref{opt:Ffinal}.
\begin{equation}\label{GradientF}
  \mathbf{F}^{t+1}\gets \mathbf{F}^t-\mu \left(\partial f(\mathbf{F}^t)-\partial \|\mathbf{F}^t\|_*\right),~\mathbf{F}^0=\mathbf{\widetilde{F}}.
\end{equation}
where $\partial\|\cdot\|_*$ is the subdifferential of the nuclear norm \cite{Qiu2014} and $\mu$ is the step size. We empirically find that such an optimization plan is rather effective and efficient to obtain a better result than stochastic gradient descent.
\subsubsection{\textbf{Update} $\mathbf{A}$}
Finally, we come to the problem of linear operator update whose corresponding convex problem is formulated as following.
\begin{equation}\label{Equ:DTL_A}
\begin{split}
  \min_{\mathbf{A}} \frac{\lambda}{2}\|\mathbf{AX-F}\|_\mathrm{F}^2+\frac{1}{2}\|\mathbf{AXZ-AX}\|_\mathrm{F}^2-\tau_1\log\det\left(\mathbf{AA}^\mathrm{T}\right)
  \end{split}
\end{equation}
To solve this problem, we also utilize the scheme of ADMM and its optimization process will be similar to that of $\mathbf{Q}$ by solving the following three subproblem alternatively.
\begin{equation}\label{SolveA}
  \begin{cases}
  \min_\mathbf{A} \frac{\lambda}{2}\|\mathbf{AX-F}\|_\mathrm{F}^2+\frac{1}{2}\|\mathbf{AXZ-AX}\|_\mathrm{F}^2\\
  +\frac{\rho}{2}\|\mathbf{A}-\mathbf{\widehat{A}}^{t}+\mathbf{\Lambda}^{t}/\rho\|_\mathrm{F}^2,\\
  \min_{\mathbf{\widehat{A}}} \frac{\rho}{2}\|\mathbf{A}^{t+1}-\mathbf{\widehat{A}}+\mathbf{\Lambda}/\rho\|_\mathrm{F}^2-\tau_1\log\det\left(\mathbf{\widehat{A}\widehat{A}}^\mathrm{T}\right),\\
  \mathbf{\Lambda}^{t+1}\gets \mathbf{\Lambda}^{t}+\rho\left(\mathbf{A}^{t+1}-\mathbf{\widehat{A}}^{t+1}\right)
\end{cases}
\end{equation}
where $\mathbf{\widehat{A}}$ is the auxiliary variable and $\rho$ and $\mathbf{\Lambda}$ are penalty and Lagrangian multiplier. Considering the above subproblems, the first one in \eqref{SolveA} will be a standard quadratic function with respect to $\mathbf{A}$. It follows that we can obtain an optimal solution by setting its derivation as zero, yielding
\begin{equation}\label{Solu:A}
  \mathbf{A}^{t+1}=(\lambda \mathbf{FX}^\mathrm{T}+\rho \mathbf{\widehat{A}}-\mathbf{\Lambda})(\lambda \mathbf{XX}^\mathrm{T}+\mathbf{E}\mathbf{E}^\mathrm{T}+\rho \mathbf{I})^{-1}
\end{equation}
where $\mathbf{E=XZ-X}$ is essentially the representation error of the primary samples. The second subproblem with respect to $\widehat{\mathbf{A}}$ will be identical to \eqref{Opt:subQ} and thus share the same optimization scheme.
\par The overall optimization scheme for two modules of DTL-FSSC will be summarized in Algorithm \ref{Algorithm1}.
\begin{algorithm}[htb]
\caption{DTL-FSSC}\label{Algorithm1}
\begin{algorithmic}[1]
%\REQUIRE ~~\\ %Input
\STATE \textbf{Input}: Observations $\mathbf{X}$; Number of subspace $K$; Regularization parameters: $\alpha$, $\beta$, $\tau$, $\tau_1$, $\lambda$ , Maximum overall iterations for FSSC, DTL and overall framework: $T_{fssc}$, $T_{dtl}$ and $T_{max}$, respectively.
\STATE{ Initialization: $t\gets0$, $\mathbf{A}\gets$ Gaussian random matrix with normalized spectral norm, $\mathbf{Z}\gets$ SSC result.}
\STATE {Main Loop:}
 \WHILE{$t\leq T_{max}$}
  \STATE {Data transformation $\mathbf{\widetilde{X}}=\mathbf{AX}$.}
  \WHILE {$t_{fssc}<T_{fssc}$}
 \STATE {Perform FSSC on $\widetilde{\mathbf{X}}$ to update $\mathbf{Q}$ and $\mathbf{Z}$ alternatively.}\\
   \ENDWHILE
 \STATE{Initialize $\mathbf{F}$ by solving \eqref{opt:Finit}.}\\
 \WHILE {$t_{dtl}\leq T_{dtl}$}
  \STATE {Update $\mathbf{F}$ with \eqref{GradientF} and update $\mathbf{A}$ according the scheme of \eqref{SolveA} alternatively.}\\
  \ENDWHILE
 \ENDWHILE
 \STATE \textbf{Output}: label assignment according to fuzzy $\mathbf{Q}$ as $\mathrm{label}(\mathbf{x}_i)\gets \arg\max_{j}(\mathbf{q}_i(j))$
\end{algorithmic}
\end{algorithm}

\subsection{Computational Complexity Analysis}
Considering our proposed optimization scheme in the module of FSSC. As it is independent for each sample, we can solve all of them in a distributed way. For one sample, when we use the FISTA solver, the computational cost will merely be concentrated on $\mathbf{\widetilde{X}}_{\setminus i}^\mathrm{T}\mathbf{\widetilde{X}}_{\setminus i}$ during computing the gradient whose complexity will be approximately $\mathcal{O}(pN^2)$. However, this term will be unchanged so that it is only required to be computed once. During optimizing $\mathbf{Q}$, it is firstly required to compute Cholesky factorization to obtain $\mathbf{G}^\mathrm{T}$ and its inversion. The rest core computations will then come from the full SVD of $\mathbf{G}^{-\mathrm{T}}\widetilde{\mathbf{\Lambda}}$ whose complexity will be normally $\mathcal{O}(N^3)$. In the module of DTL, we have to firstly compute an elaborated  $\mathbf{\widetilde{F}}$ to produce a better initialization, in which singular vector threshold (SVT) operator will be leveraged for low rank requirement \cite{cai2010singular} with $\mathcal{O}\left(\min(p^2N,pN^2)\right)$ complexity. Then computing the subgradient of nuclear norm for updating $\mathbf{F}$ will also cost $\mathcal{O}\left(\min(p^2N,pN^2)\right)$ by exploiting SVD. We can conclude that when $p\ll n$, it will require fewer computations in the above operations. When we update $\mathbf{A}$ with ADMM, we have to firstly compute the inverse of $(\lambda \mathbf{XX}^\mathrm{T}+\mathbf{E}\mathbf{E}^\mathrm{T}+\rho \mathbf{I})$. This operation will normally cost $\mathcal{O}(n^3)$. However, when $N<n$, we can resort to matrix inverse equation to decrease this complexity $\mathcal{O}(N^3)$. In both cases, this operation will keep unchanged during operator learning. For the auxiliary variable, full SVD of $\mathbf{A}+\mathbf{\Lambda}/\rho$ will be computed with the cost of $\mathcal{O}(p^2n)$. In a summary, the complexity of the global framework is acceptable compared with the other SSC based frameworks but we can still conclude that the complexity is cubical with $N$ so that it will not appropriate for the large scaled SC applications. We empirically observe that it will only spend less than half a minute on a PC for most situations and two minutes for the most extreme cases in our experiments. However, for spectral clustering based approaches, handling the large scaled problem is always an intrinsic issue and it is actually beyond the scope of this paper. We suggest that a possible solution may refer to the strategies in \cite{Tremblay2016,Peng2016a} or some stochastic gradient based optimization schemes.
\subsection{Remarks}
\par In this part, we will present some remarks on some issues for reproducing the framework in practical applications. It is better to perform iterative updating in the module of FSSC, especially when we aim to learn a relatively low dimensional subspace for clustering. As a consequence, we always set $T_{fssc}=3$ to achieve a tradeoff between computational cost and performance. On the contrary, we will set $T_{dtl}=1$ to produce an acceptable performance. In the module of DTL, since we have developed a novel strategy for initializing $\mathbf{F}$, the subsequent subgradient descent scheme will be performed within only 10 iterations in our experiments. For global iterations, we observe that as the dimension of the feature domain decreases, more iterations will be required to produce a stable performance. Therefore, in our experiments, $T_{max}=30$ will be considered for a $p\ll n$ dimensional feature subspace while $T_{max}=10$ will be enough for $p=n$ dimensional domain. Another important issue is that all projections to be clustered in the feature subspace can be normalized with unit $\ell_2$ length to improve the efficiency.

\section{Experiments}\label{Sec:Experiments}
In this section, we will evaluate the proposed framework with extensive experiments. In the first place, the convergences of both modules will be empirically validated, respectively. Then the performances of the regularization parameters will be investigated. Some specific experiments will be designed to establish the effectiveness of our proposed framework. It will be finally compared with the other state-of-the-art approaches for subspace clustering on benchmark databases of three typical applications to demonstrate its superiorities, namely Hopkins 155 dataset\footnote{http://www.vision.jhu.edu/data/hopkins155/} for motion segmentation \cite{Tron2007}, Binary Alphadigits dataset (BAD)\footnote{http://www.cs.nyu.edu/~roweis/data.html} for handwritten digits image clustering \cite{Patel2014Kernel}, Extended Yale B database (EYB) for face image clustering \cite{lee2005acquiring}. All experiments are conducted on a desktop PC equipped with i5 CPU$@$3.2GHz and 8GB memory under the environment of MATLAB-2014b.
\subsection{Initialization and Convergence Analysis}
 \begin{figure*}
  \centering
  % Requires \usepackage{graphicx}
   \subfigure[]{\includegraphics[width=0.3\textwidth]{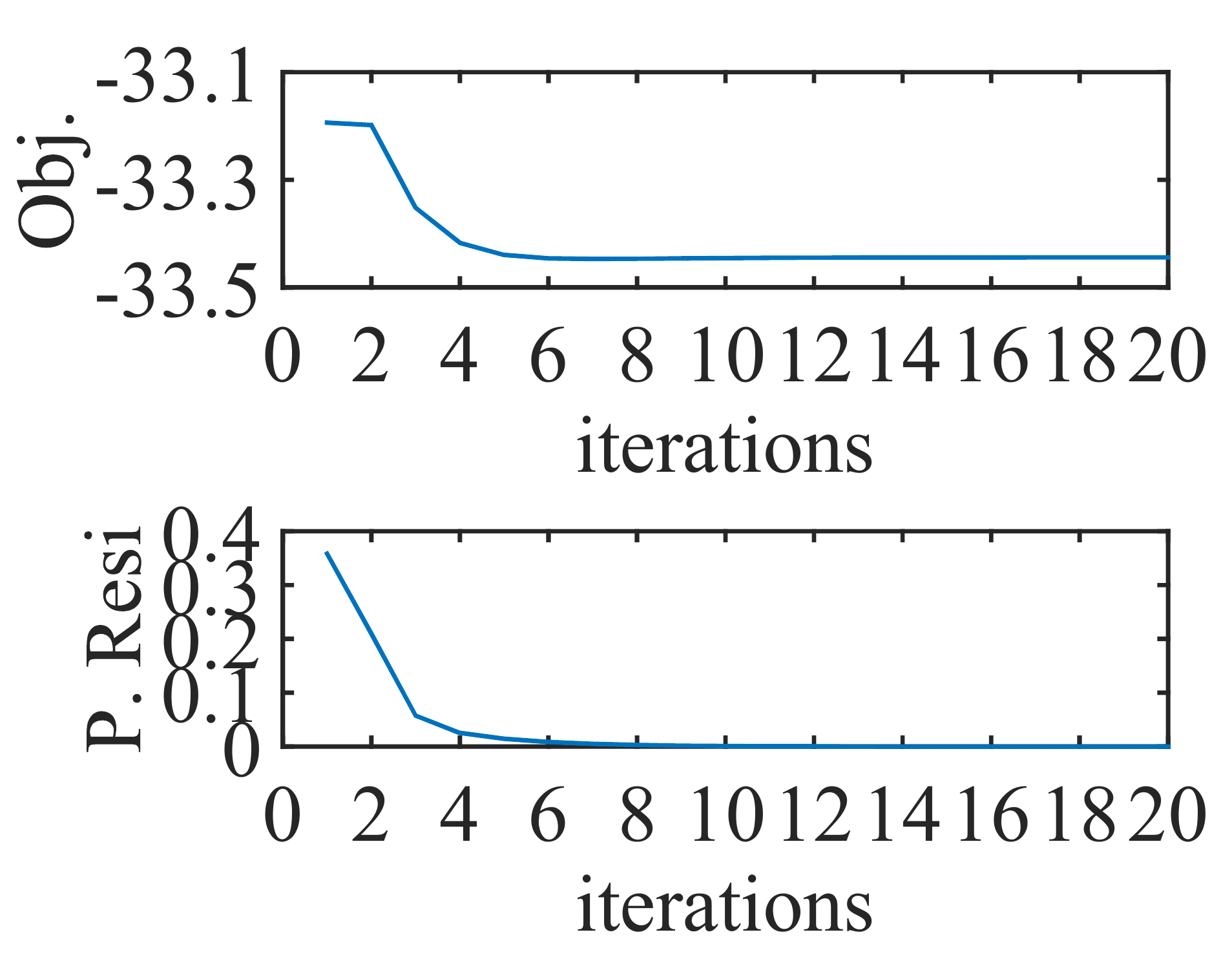}\label{Fig:Conv_Q}}
  \subfigure[]{\includegraphics[width=0.3\textwidth]{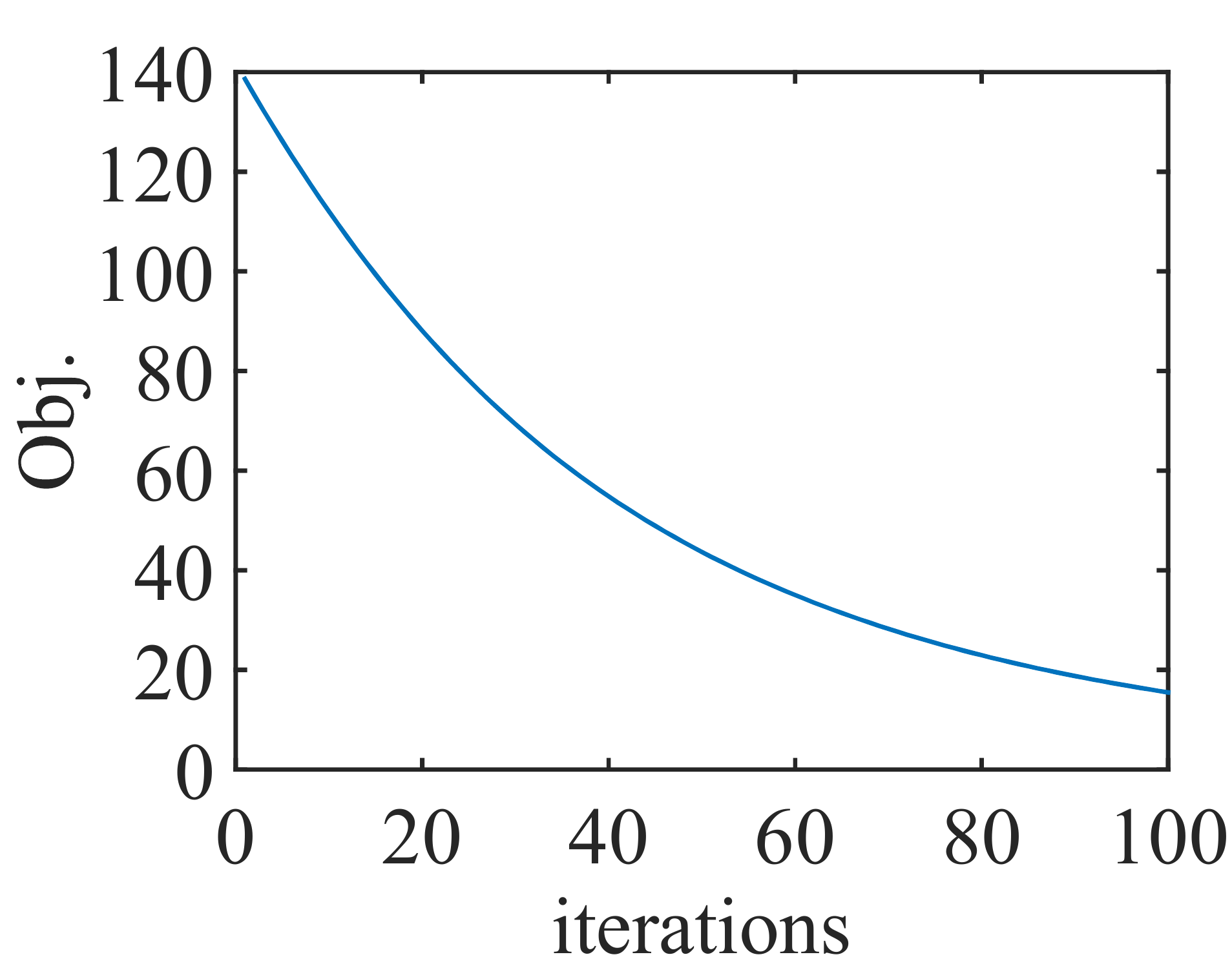}\label{Fig:Conv_F}}
    \subfigure[]{\includegraphics[width=0.3\textwidth]{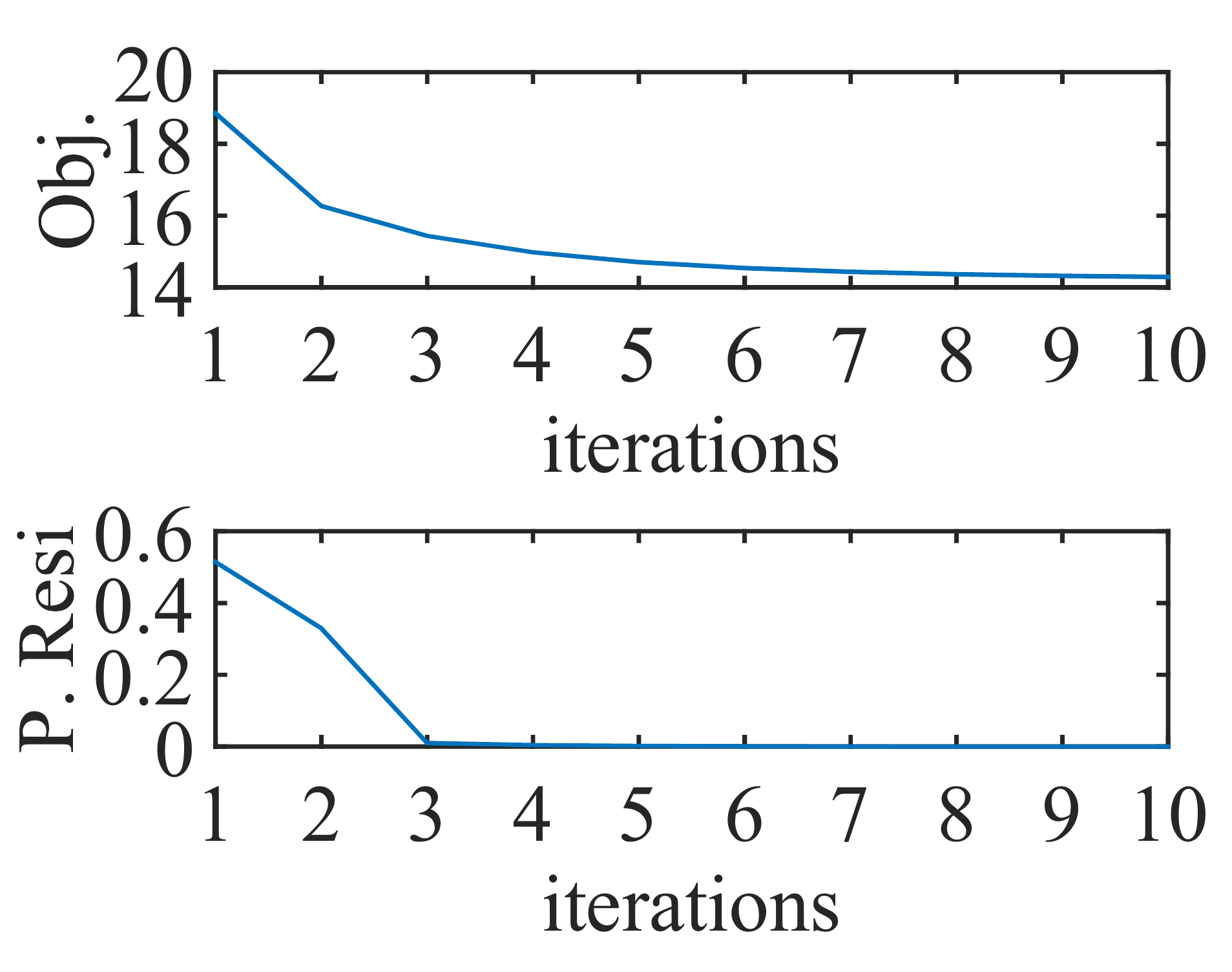}\label{Fig:Conv_A}}
  \caption{Convergence curves v.s. iteration times. (a). Objective function value (Obj.) and primal residual (P.Resi.) of Opt. \eqref{Equ:FSSC_Q}. (b).Obj. of Opt. \eqref{Equ:F}. (c).Obj. and P. Resi of Opt. \eqref{Equ:DTL_A} }\label{Fig:Conve}
\end{figure*}
\par In our framework, $\mathbf{A}$ is initialized as the standard Gaussian matrix and then normalized its spectral norm as 1. $\mathbf{Z}$ will be initialized as the result of the standard SSC on $\mathbf{AX}$. In the phase of FSSC, it is a bi-convex problem, \emph{i.e.,} updating $\mathbf{Z}$ or $\mathbf{Q}$ with other one fixed is a convex problem admitting the global solution. More specifically, if FISTA solver is exploited to update $\mathbf{Z}$, we can receive a fast convergence for objective function \eqref{Equ:FSSC_Z} with rate $\mathcal{O}(1/t^2)$ \cite{Beck2009}. For the subproblem of $\mathbf{Q}$, we adopt ADMM framework to produce a solution whose convergence has been investigated in \cite{Boyd2011}. During the phase of DTL, we exploit a heuristic scheme to choose a starting point of $\mathbf{F}$ by solving \eqref{opt:Finit} which is still a convex problem admitting the global optimal solution. Based on this starting point, we use the subgradient method to converge to a local stationary point $\mathbf{F}$. Finally, the subproblem with respect to $\mathbf{A}$ is similar to that of $\mathbf{Q}$, which is a convex problem with a global optimal result. We empirically check the convergence for each subproblem in which samples from Hopkins 155 dataset will be exploited without loss of the generality. The convergence curves v.s. iterations in their corresponding optimization schemes for updating $\mathbf{Q}$, $\mathbf{F}$ and $\mathbf{A}$ are respectively plotted in Figs. \ref{Fig:Conve}. Inspecting the curves, we may conclude that algorithms for updating $\mathbf{Q}$ and $\mathbf{A}$ will converge within at most 10 iterations while \eqref{Equ:F} will converge slowly due to the subgradient descent scheme. However, it is declared in the previous discussion that it is not needed to reach an exact solution so as to get rid of over-fitting \cite{Bishop2006}. Therefore, during updating $\mathbf{F}$, we empirically conclude that it is better to stop the iteration procedure within 10 times, which have been discussed in the previous part.

\subsection{Parameter Analysis}
\par In the two modules framework of DTL-FSSC, three and two regularization hyper-parameters are respectively involved in FSSC and DTL. In this part, the behaviours of these parameters will be empirically validated on Binary Alphadigits dataset, in which three digits are randomly selected for clustering. Before we start, we need some criterion to evaluate the clustering performance \cite{Strehl2002}. Following the common criterion in the previous SC researches \cite{Elhamifar2013}, we will also leverage the clustering error rate to evaluate the performance which is defined as:
\begin{equation}\label{Equ:Missrate}
  \mathrm{clustering~error} = \frac{\# \mathrm{misclassified~samples}}{\mathrm{total}~\#~\mathrm{of~samples}}\times 100\%
\end{equation}
\par We firstly keep the parameters of $\lambda$ and $\tau_1$ in DTL module unchanged, and investigate $\alpha$, $\beta$ and $\tau$. To do so, we firstly fix $\tau$ and vary $\alpha$ and $\beta$ from $0.01$ to $1$, respectively. The clustering performances are plotted in Fig. \ref{Fig:Para_alphabetamean}. These two regularization parameters both control the sparsity of $\mathbf{Z}$ will determine the similarities (magnitude) and connections (zero or not) among each feature point. To be more concrete, under the assumption of linear subspace, a larger value of $\alpha$ or $\beta$ will encourage less false connections but the true connections will be simultaneously decreased, which will not be beneficial for the clustering \cite{Soltanolkotabi2014}. We can see from the results that when the bias parameter $\alpha$ is set as a small value, \emph{e.g.}, 0.01, the clustering error will be relatively large no matter how $\beta$ varies. When the value of $\alpha$ is increasing, the performance will be significantly promoted. Compared with $\alpha$, the influence of $\beta$ is weak but we will still need a suitable value to achieve a better performance. Next, we fix $\alpha$, $\beta$ and vary $\tau$ from $1.5$ to $10$ and plot the result in Fig. \ref{Fig:Para_tau}. This regularization parameter for log barrier term will control the rank of membership matrix $\mathbf{Q}$. To provide a full rank output, a larger value of $\tau$ is always preferred, otherwise some columns of $\mathbf{Q}$ will be zero, yielding a defective number of clusters. However, too large a value will generate an over-penalty result and the clustering performance will be also degraded. Accordingly, a proper value should be carefully tuned for different datasets. See from the curve that the clustering error will firstly decrease as the increase of $\tau$ and then increase in the case of $\tau>7.5$, which indeed accounts for our previous analysis. Finally, we vary the regularization parameters $\lambda$ and $\tau_1$ in DTL module from $0.01$ to $1$ and $0.001$ to $1$, respectively. Since $\lambda$ controls the weight of discriminative regularization, a proper value is always needed to achieve a better discrimination. Intuitively speaking, a small value of $\lambda$ will produce a more separated feature vectors with respect to $\mathbf{Q}$. Nevertheless, since we have no accurate supervised label information, a fuzzy membership in $\mathbf{Q}$ is not always correct. As a result, too large value of $\lambda$ will sometimes result in a biased discrimination so as to degrade the performance contrarily. Seeing from the results in Fig. \ref{para_lambdatau1}, the rates are oscillation during varying $\lambda$. However, we can still observe that the optimal value for $\lambda$ will be around 0.09 for this database. For $\tau_1$, its impact on clustering error is not apparent. In our following experiments, $\tau_1$ will be always set within 0.001 to 0.07 to reach a satisfactory performance and other hyper-parameters will be tuned for each database, respectively.
 \begin{figure*}
  \centering
  % Requires \usepackage{graphicx}
   \subfigure[]{\includegraphics[width=0.3\textwidth]{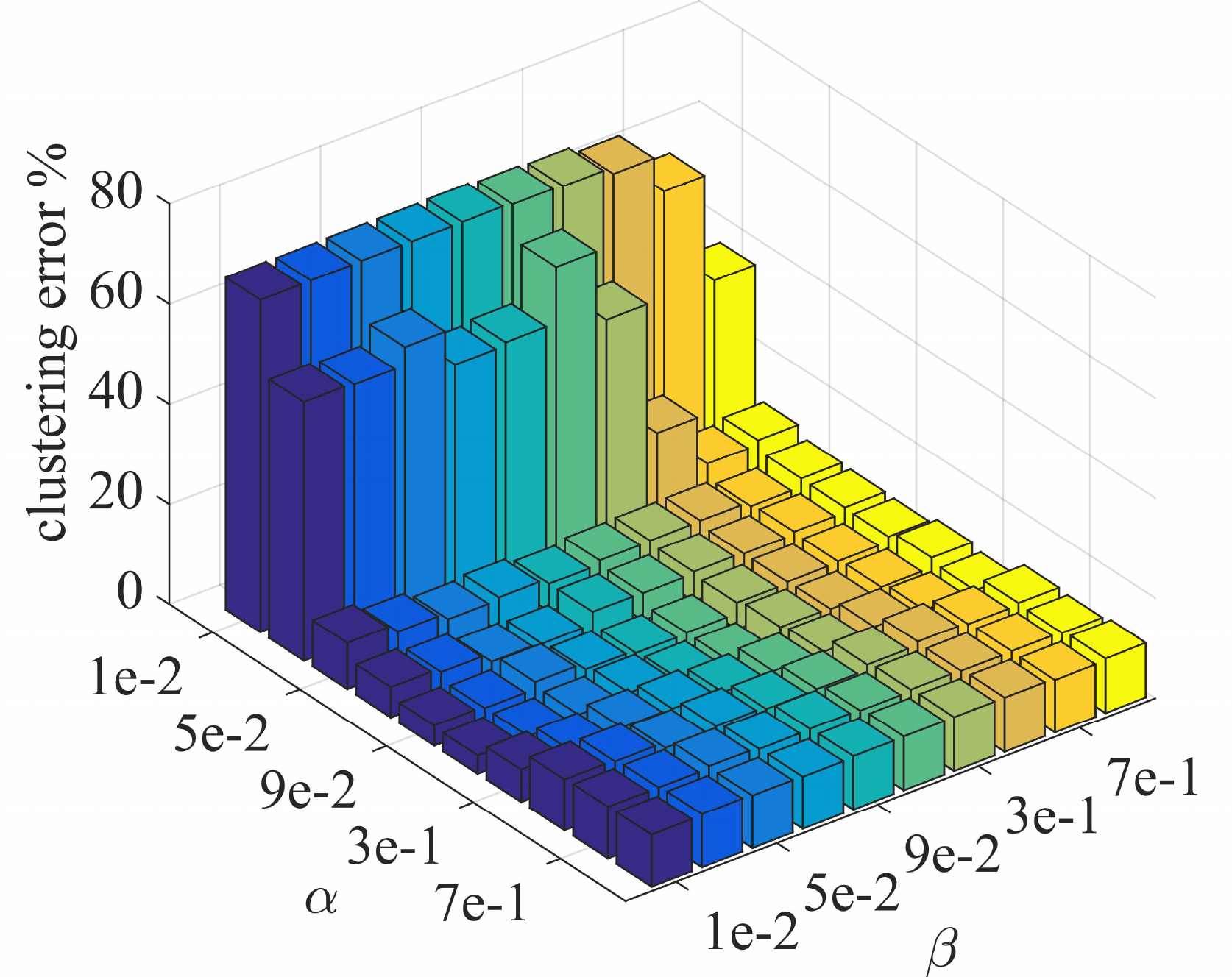}\label{Fig:Para_alphabetamean}}
  \subfigure[]{\includegraphics[width=0.3\textwidth]{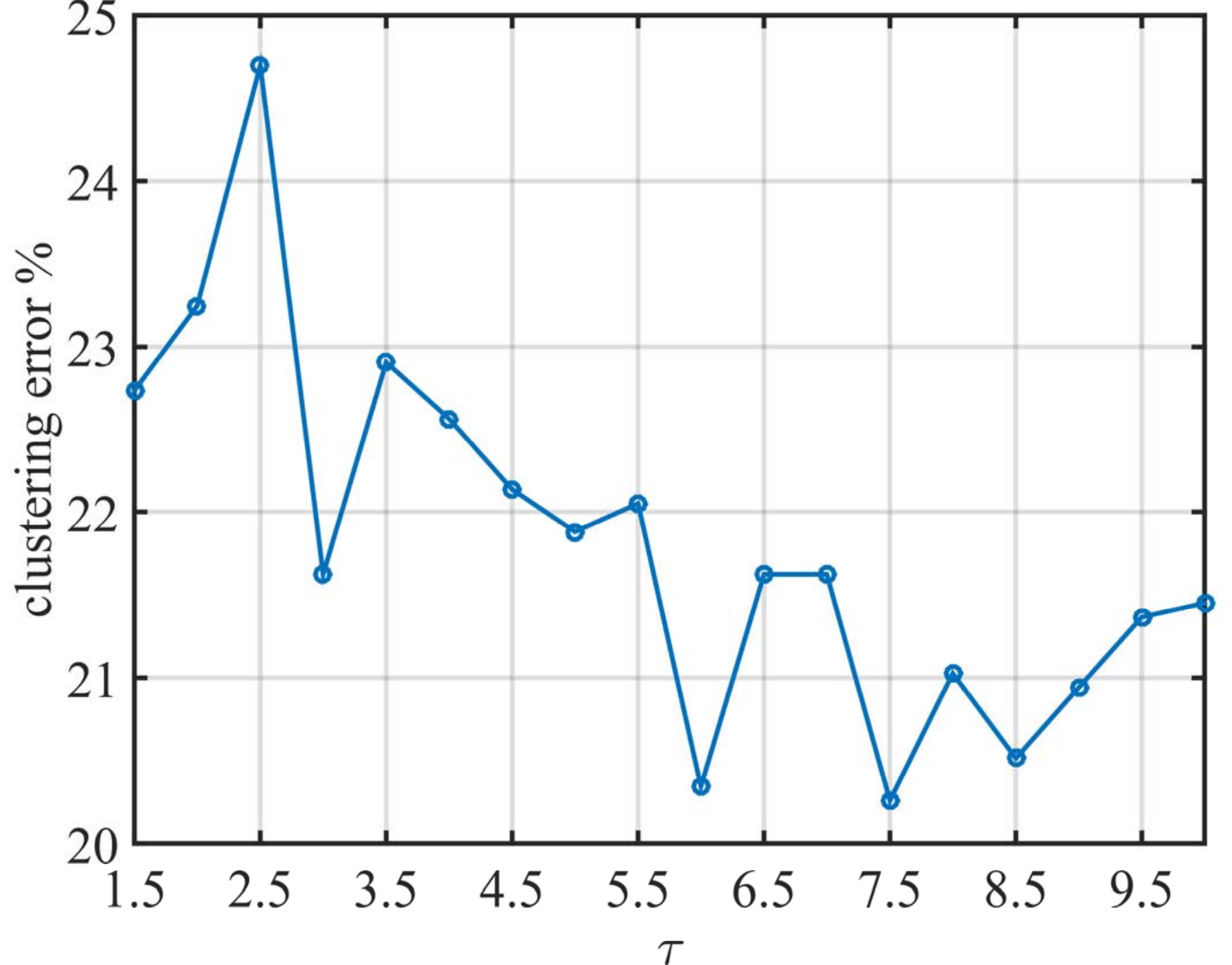}\label{Fig:Para_tau}}
    \subfigure[]{\includegraphics[width=0.3\textwidth]{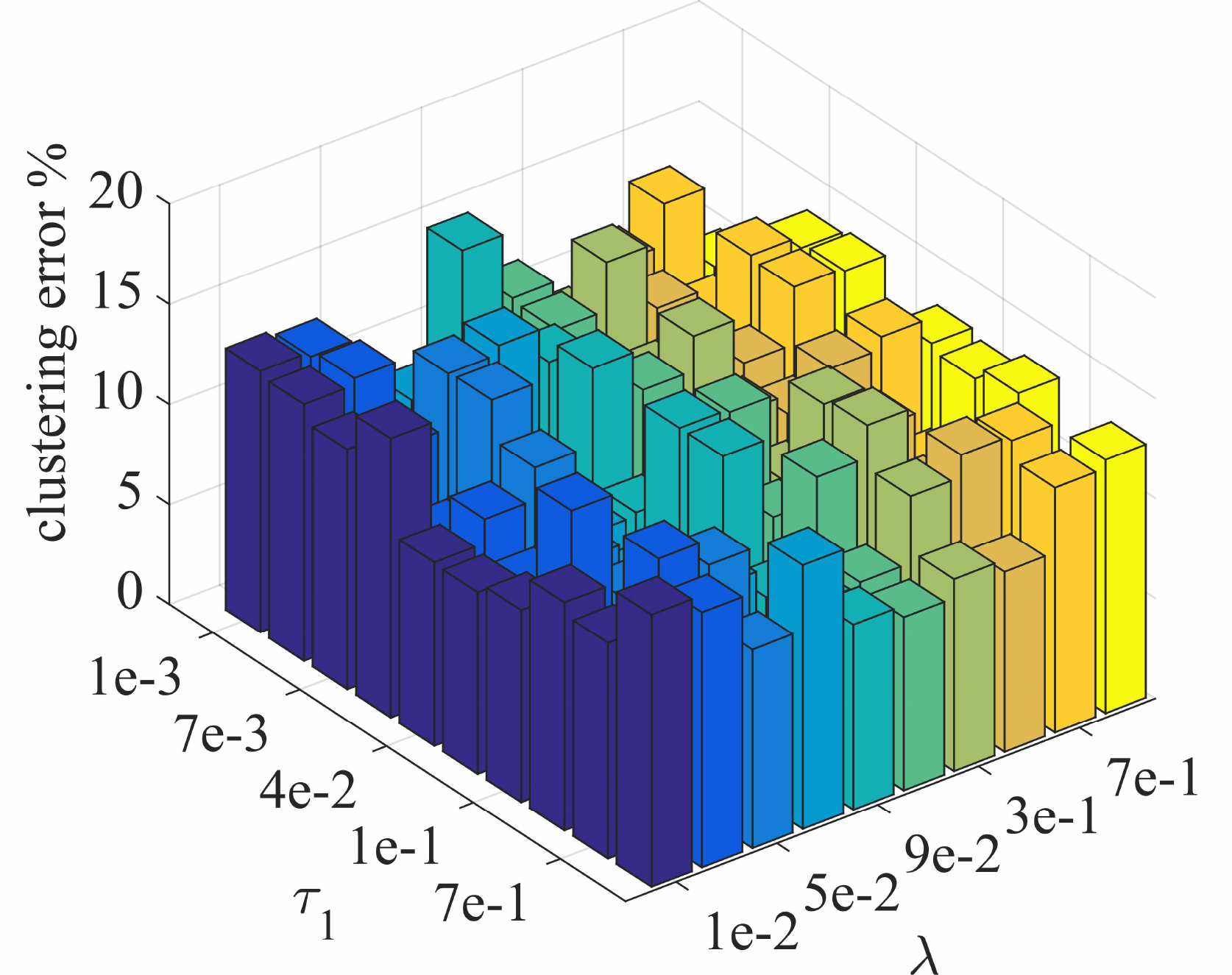}\label{para_lambdatau1}}
  \caption{Clustering error v.s. different regularization parameters. (a). $\alpha$ and $\beta$ in FSSC. (b). $\tau$ in FSSC. (c). $\lambda$ and $\tau_1$ in DTL.}\label{Fig:Para}
\end{figure*}

\subsection{Framework Validation}
\par In this subsection, several illustrative experiments will be designed to demonstrate the effectiveness of our proposed framework and make it more convinced. In the first place, we will evaluate the behaviour of our designed fuzzy label matrix $\mathbf{Q}$ in FSSC and DTL. Next, discrimination of $\mathbf{A}$ will be investigated visually and quantitively.
\subsubsection{Fuzzy Membership Evaluation}
\par In order to demonstrate the performance of the fuzzy label matrix, a deterministic binary matrix will be simultaneously computed with the standard spectral clustering algorithm while other procedures in DTL-FSSC will remain unchanged. For the sake of declaration, we select and illustrate the typical clustering error curves of 5-classes in Figs. \ref{Fig:ValidateFuzzy}, where $p=N$ on EYB and $p=10K$ on BAD are evaluated, respectively. In Fig. \ref{Fig:FuzzyvsDeterministic}, two types of label matrices will both decrease the clustering errors as iterations proceeding and achieve much better performance than the initialized result of SSC. However, the noteworthy distinction will focus on the speed of error convergence. It is obviously noticed that fuzzy one can reach a much rapid decrease and converge within 10 iterations while the binary matrix will be slower. In Fig. \ref{Fig:FuzzyvsDeterministic_10K}, we illustrate another typical phenomenon, where clustering error with fuzzy membership will suddenly increase but it can gradually make a refinement as iterations proceeding to converge to a satisfactory result at last. On the contrary, binary membership matrix seems to be weak from this perspective. These two typical exemplars demonstrate that our fuzzy label membership will be more robust than a binary one, even if we empirically find that fuzzy membership cannot always achieve a lower clustering error than the binary one.
 \begin{figure}
  \centering
  % Requires \usepackage{graphicx}
   \subfigure[]{\includegraphics[width=0.24\textwidth]{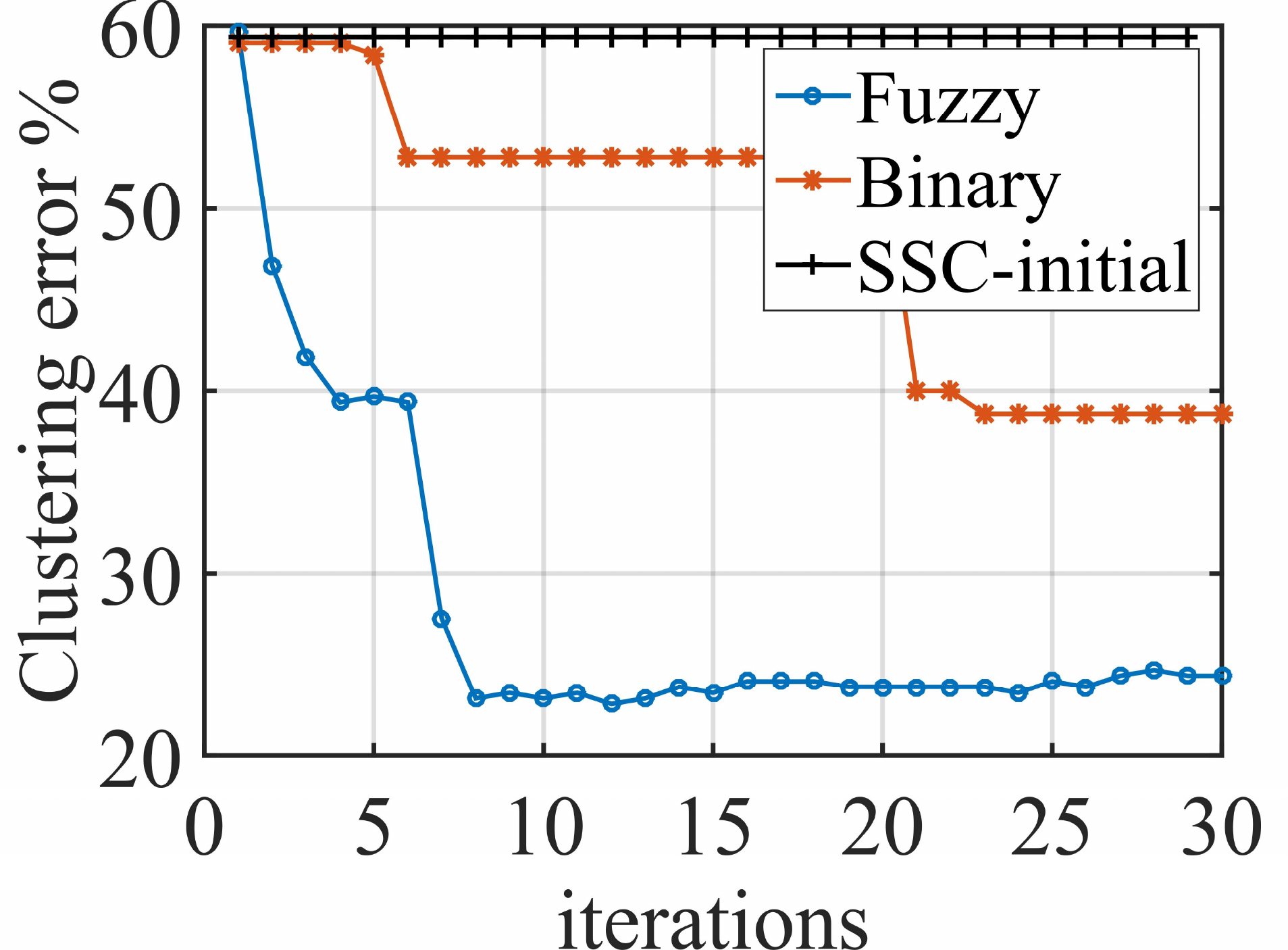}\label{Fig:FuzzyvsDeterministic}}
  \subfigure[]{\includegraphics[width=0.24\textwidth]{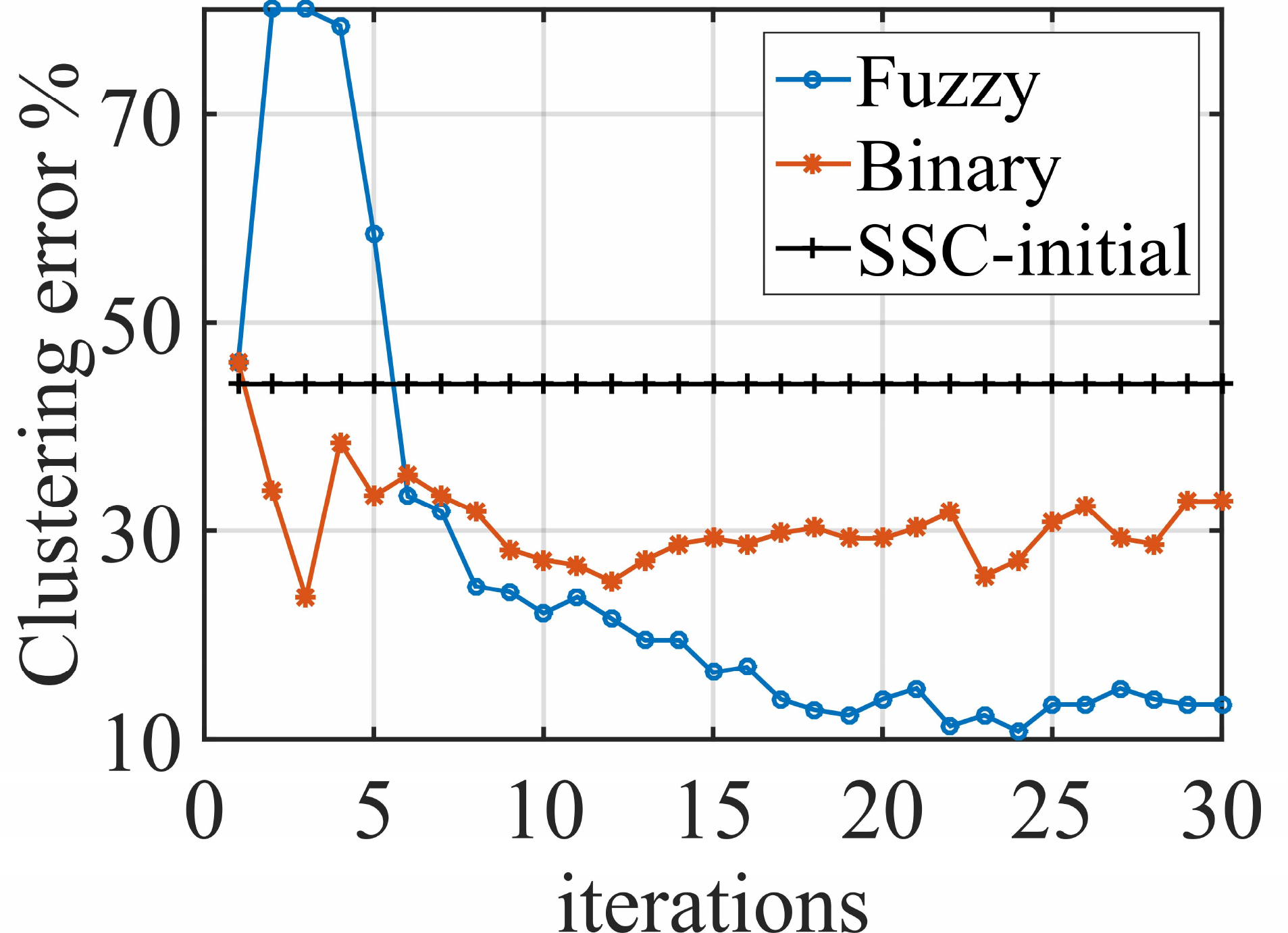}\label{Fig:FuzzyvsDeterministic_10K}}
  \caption{Clustering performances of fuzzy and deterministic membership in different dimensional transformed subspaces. (a). $p=N$ on EYB (b). $p=10K$ on BAD.}\label{Fig:ValidateFuzzy}
\end{figure}

 \subsubsection{Discrimination Evaluation}
Next, we will visualize the transformed projections to check their separations. To this end, Figs. \ref{Fig:scatterplot} plot the scatters of transformed vectors $\mathbf{AX}$ embedded in a two dimensional plane with PCA for visualization, where 3-classes samples in EYB and $3,~5,~10$ digit from BAD are respectively clustered in a $N$-dimensional feature domain. From the visual plots, we can clearly observe that separability between each subspace indeed increases as iteration proceeding, especially for EYB.
 \begin{figure}
  \centering
  % Requires \usepackage{graphicx}
   \subfigure[]{\includegraphics[width=0.15\textwidth]{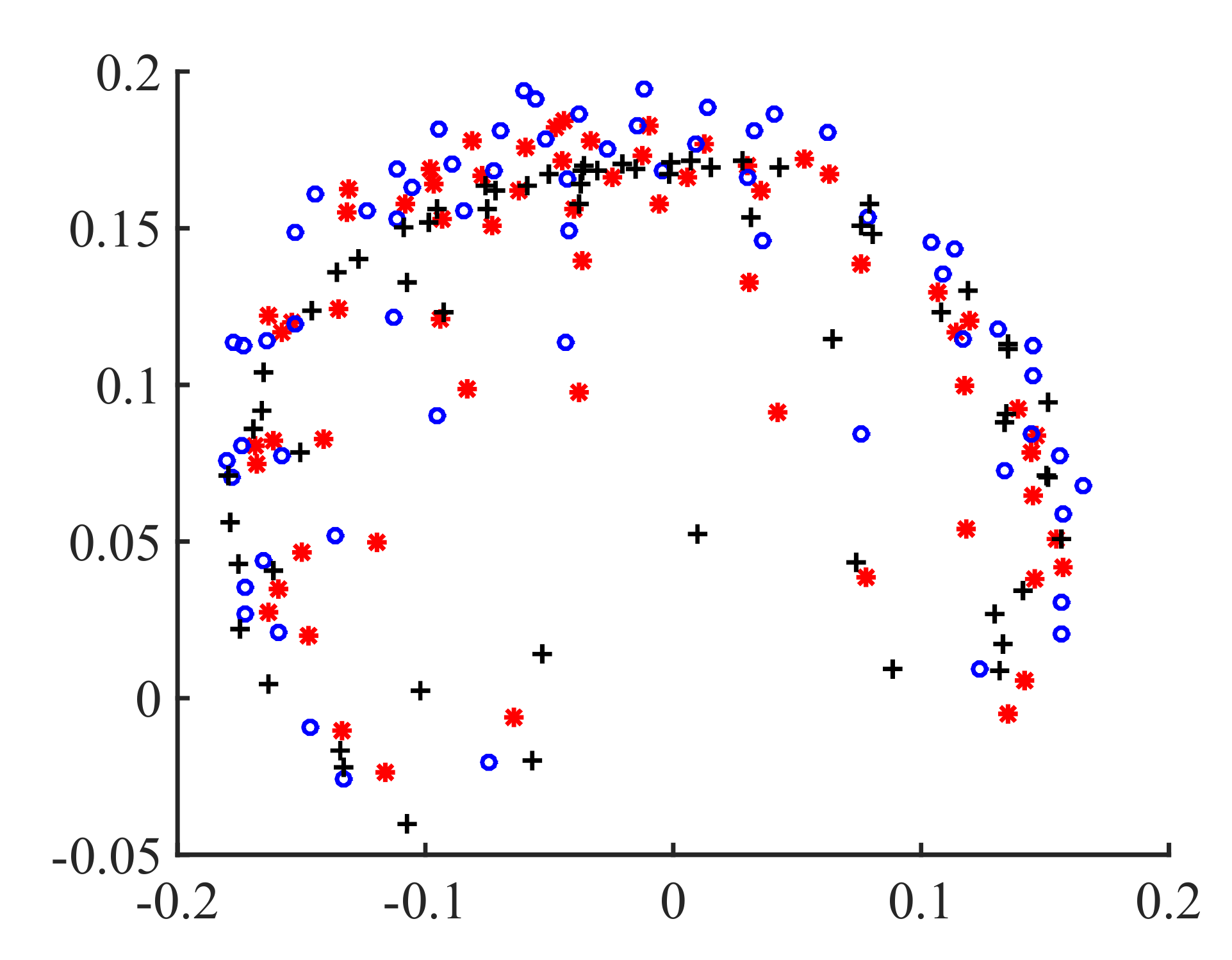}\label{Fig:Primary2DfaceScatter}}
  \subfigure[]{\includegraphics[width=0.15\textwidth]{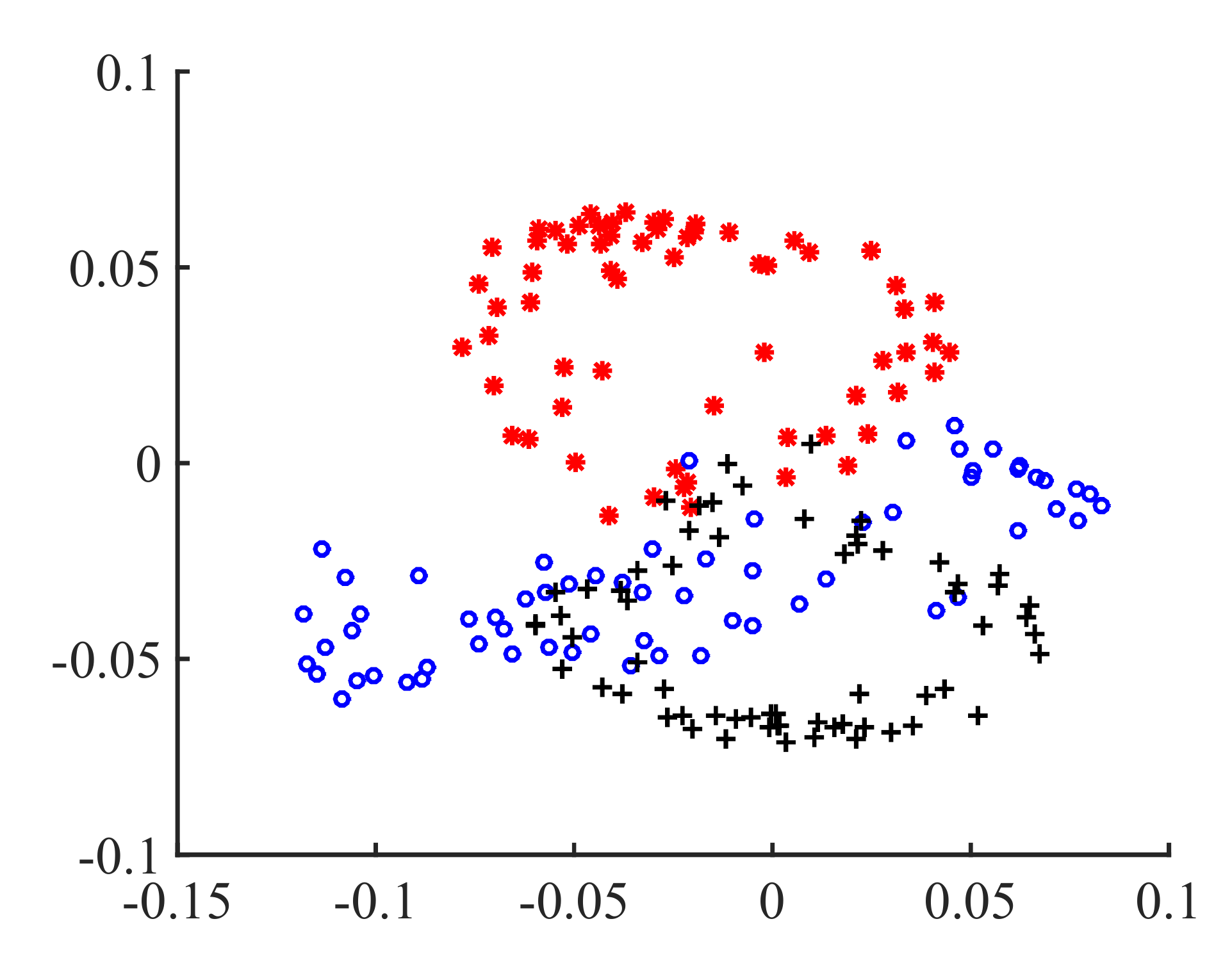}\label{Fig:Primary2DfaceScatterafter2nditeration}}
  \subfigure[]{\includegraphics[width=0.15\textwidth]{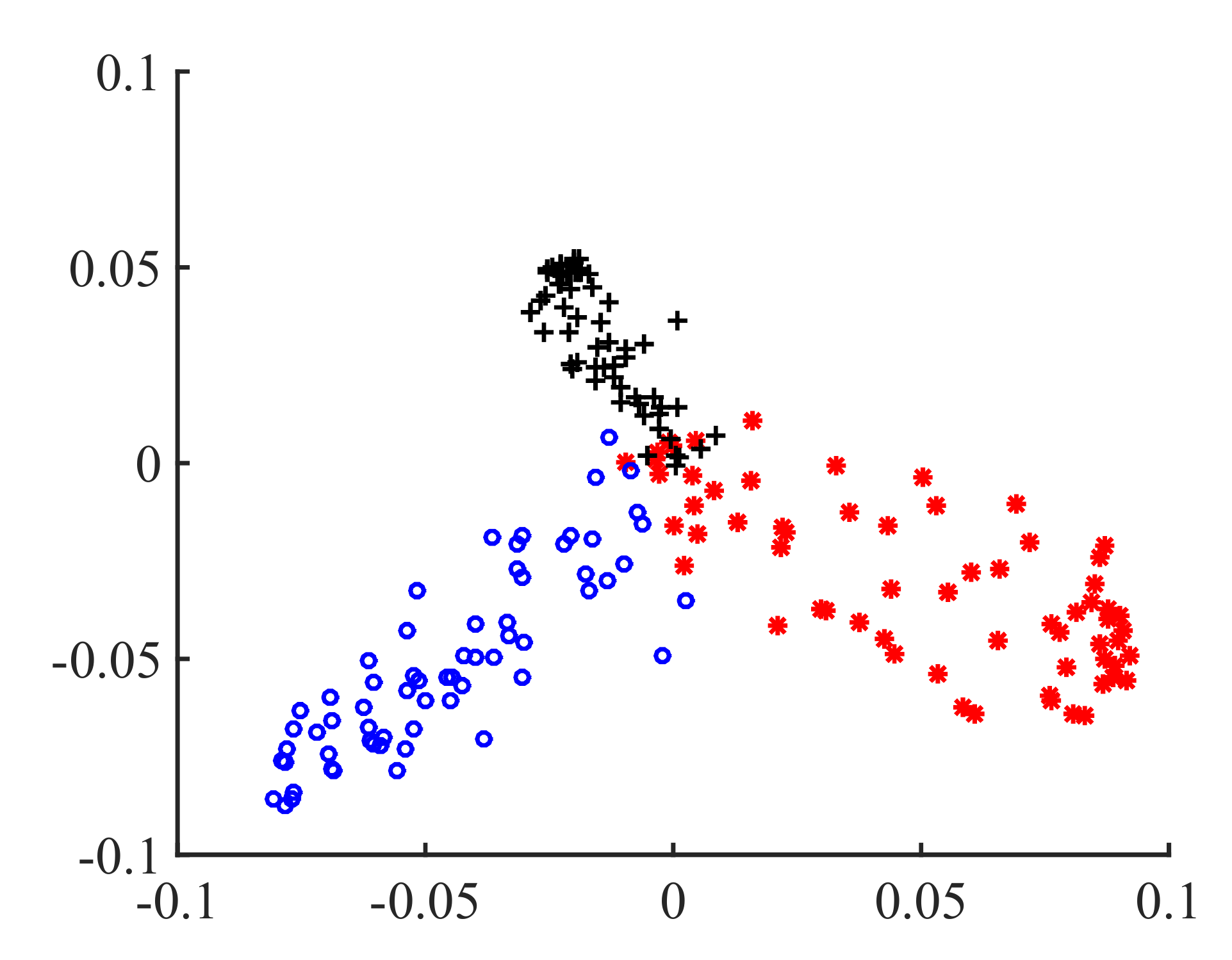}\label{Fig:Primary2DfaceScatterafter4thiteration}}
     \subfigure[]{\includegraphics[width=0.15\textwidth]{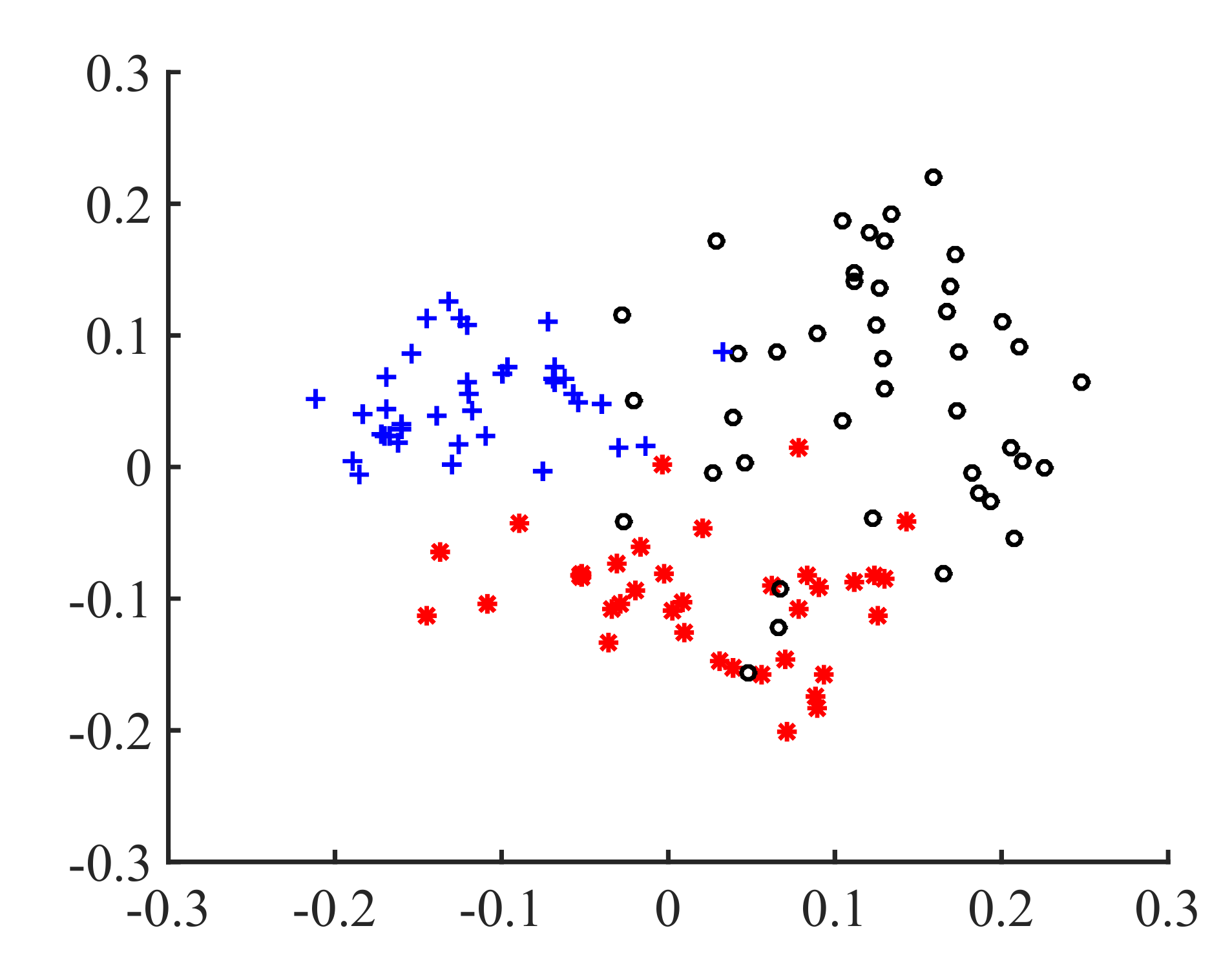}\label{Fig:Primary2DDigitScatter}}
  \subfigure[]{\includegraphics[width=0.15\textwidth]{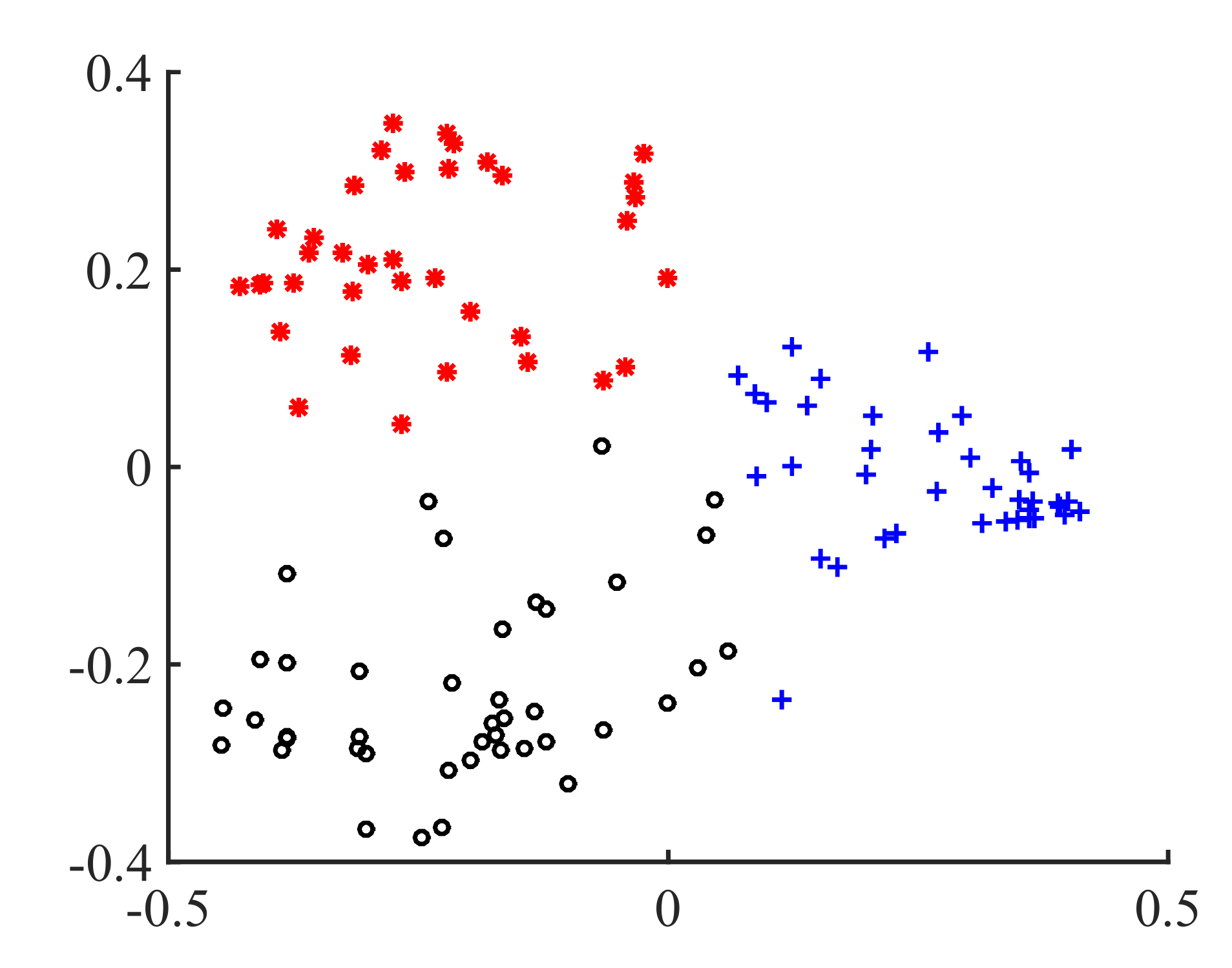}\label{Fig:Primary2DDigitScatterafter2nditeration}}
  \subfigure[]{\includegraphics[width=0.15\textwidth]{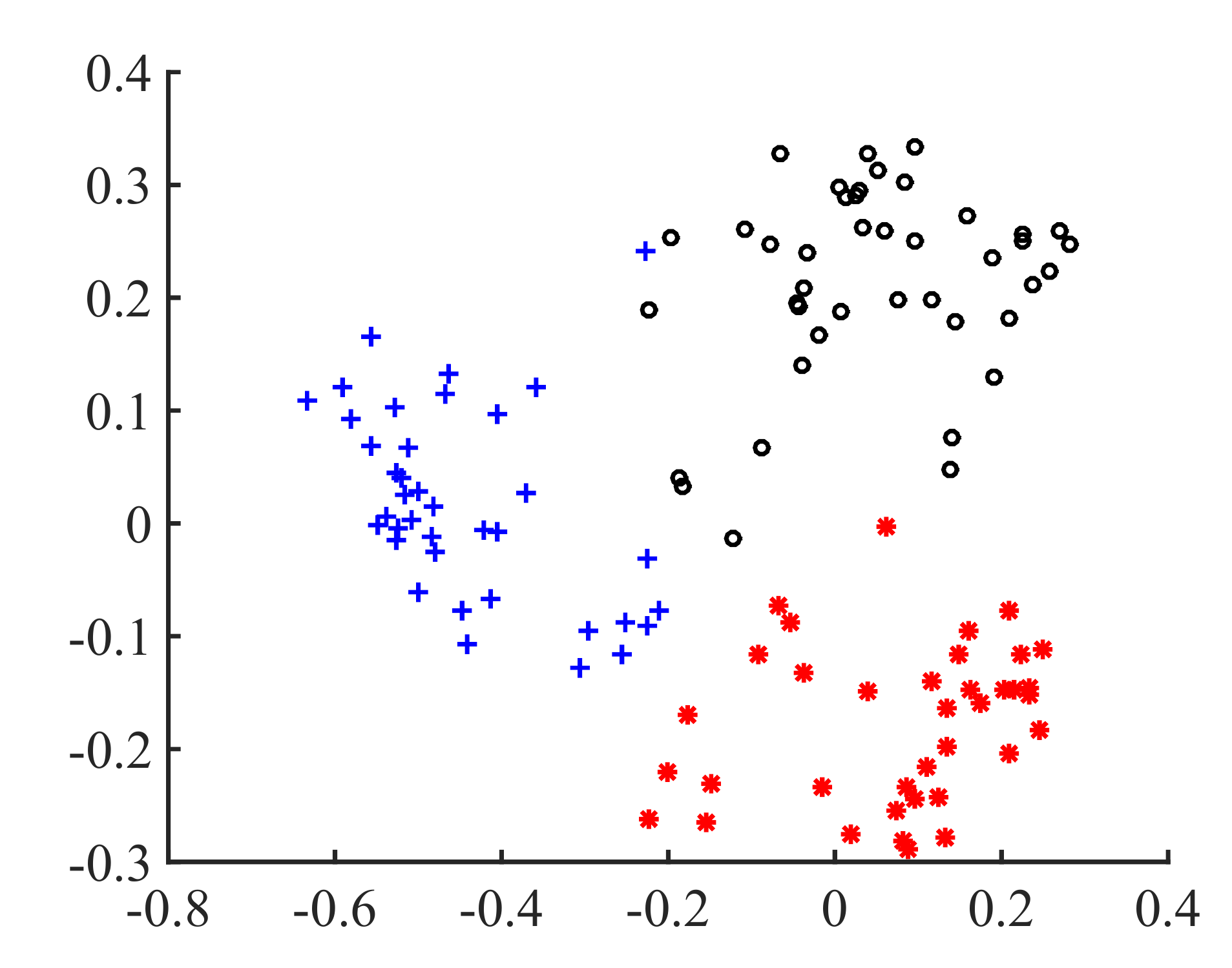}\label{Fig:Primary2DDigitScatterafter4thiteration}}
  \caption{Scatters plot in the two-dimensional plane. (a)-(c): EYB. (d)-(f): BAD (Digits 3,5,10). (a),(d) projected original data. (b),(e). projected transformed data after 2nd iteration. (c),(f). projected transformed data after 4th iteration.}\label{Fig:scatterplot}
\end{figure}
\par To quantify its performance, we compute the smallest principal angle between the transformed subspace $\mathcal{S}_i$ and $\mathcal{S}_j$ defined as following \cite{Elhamifar2013}
\begin{equation}\label{SmallestAngle}
  \theta_{i,j}=\min_{\mathbf{x}_i\in\mathcal{S}_i,\mathbf{x}_j\in\mathcal{S}_j}\arccos\frac{\mathbf{u}^\mathrm{T}\mathbf{v}}{\|\mathbf{u}\|_2\|\mathbf{v}\|_2},~\mathbf{u}=\mathbf{Ax}_i,\mathbf{v}=\mathbf{Ax}_j \end{equation}
where $\mathbf{A}$ is the learned transformation matrix and $\mathbf{x}_i$ is a sample in $i$-th subspace. Roughly speaking, a larger principal angle $\theta_{i,j}$ is, the more separated of $\mathcal{S}_i$ and $\mathcal{S}_j$ will be and it will be easier for clustering. The quantities of the smallest principle angles are depicted in Figs. \ref{Fig:principalangleplot}, where samples from 5-classes of EYB are leveraged. Two conclusions can be induced from Figs. \ref{Fig:principalangleplot} as following: 1). Our method will significantly increase the principle angle between each pair of subspace, yielding more separated transformation subspaces; 2). Increasing the smallest principal angle will be benefit for clustering.

 \begin{figure*}
  \centering
  % Requires \usepackage{graphicx}
   \subfigure[]{\includegraphics[width=0.26\textwidth]{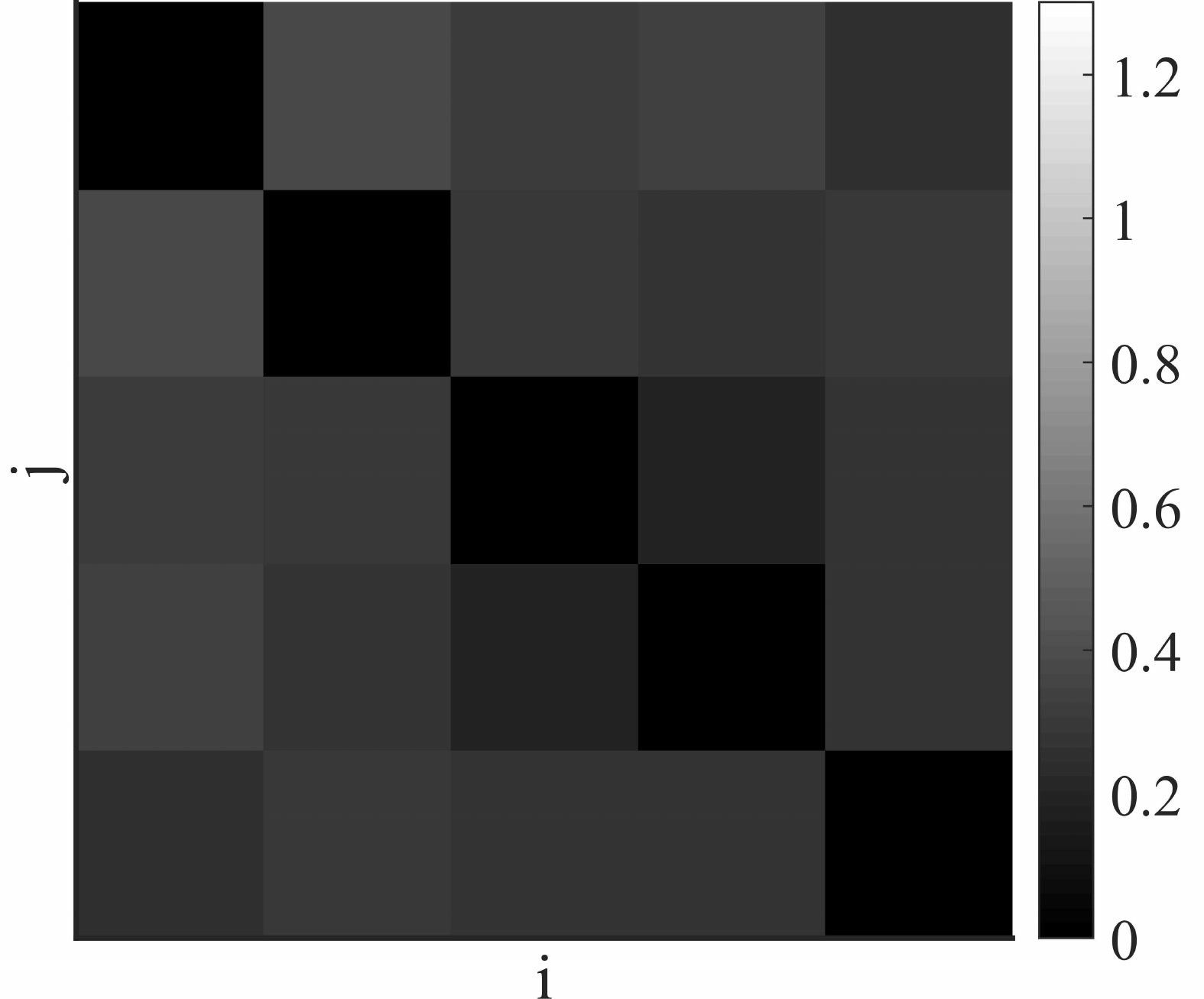}\label{Fig:PrincipalAngleFace}}
  \subfigure[]{\includegraphics[width=0.26\textwidth]{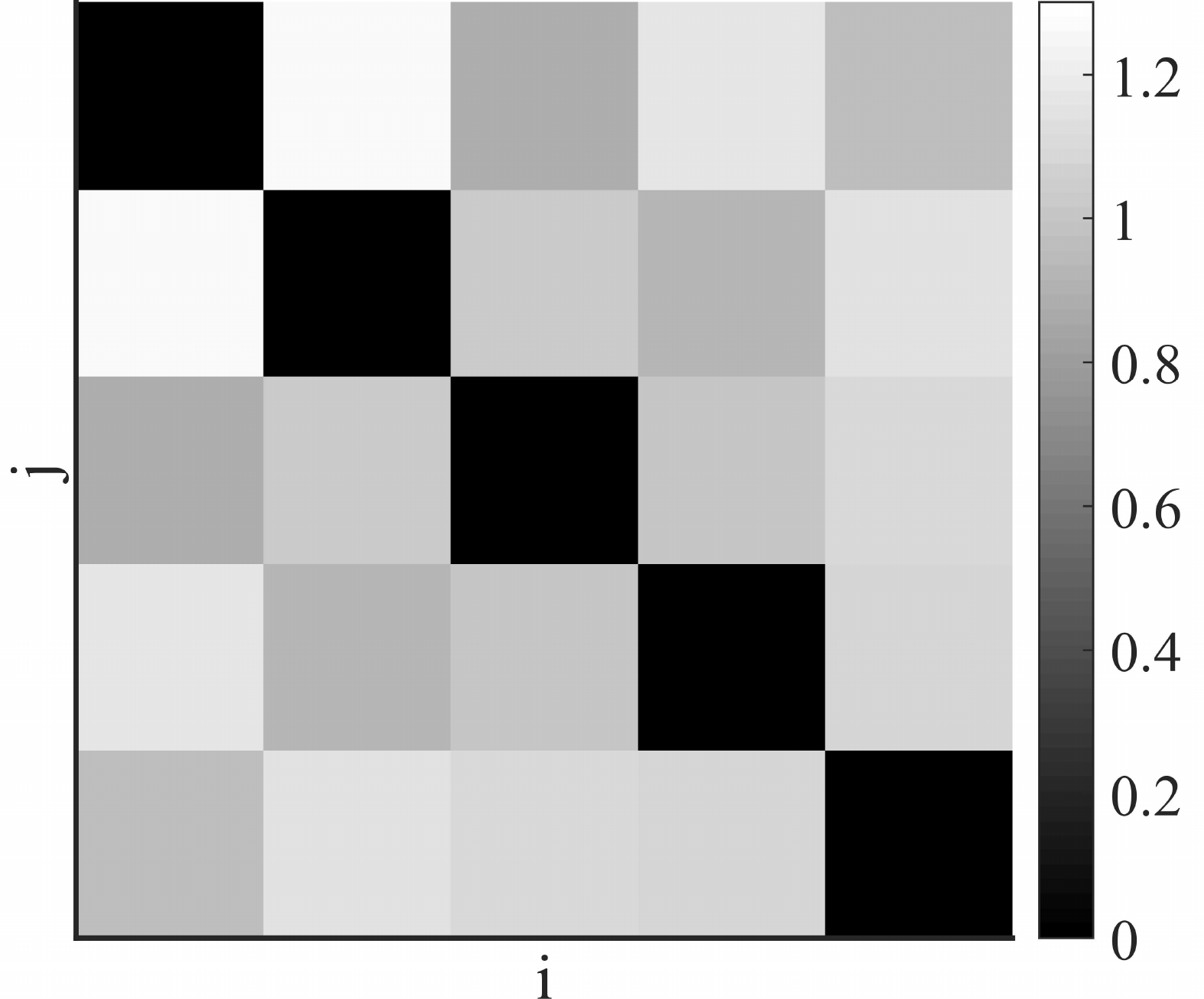}\label{Fig:PrincipalAngleFace_Iteration10}}
  \subfigure[]{\includegraphics[width=0.3\textwidth]{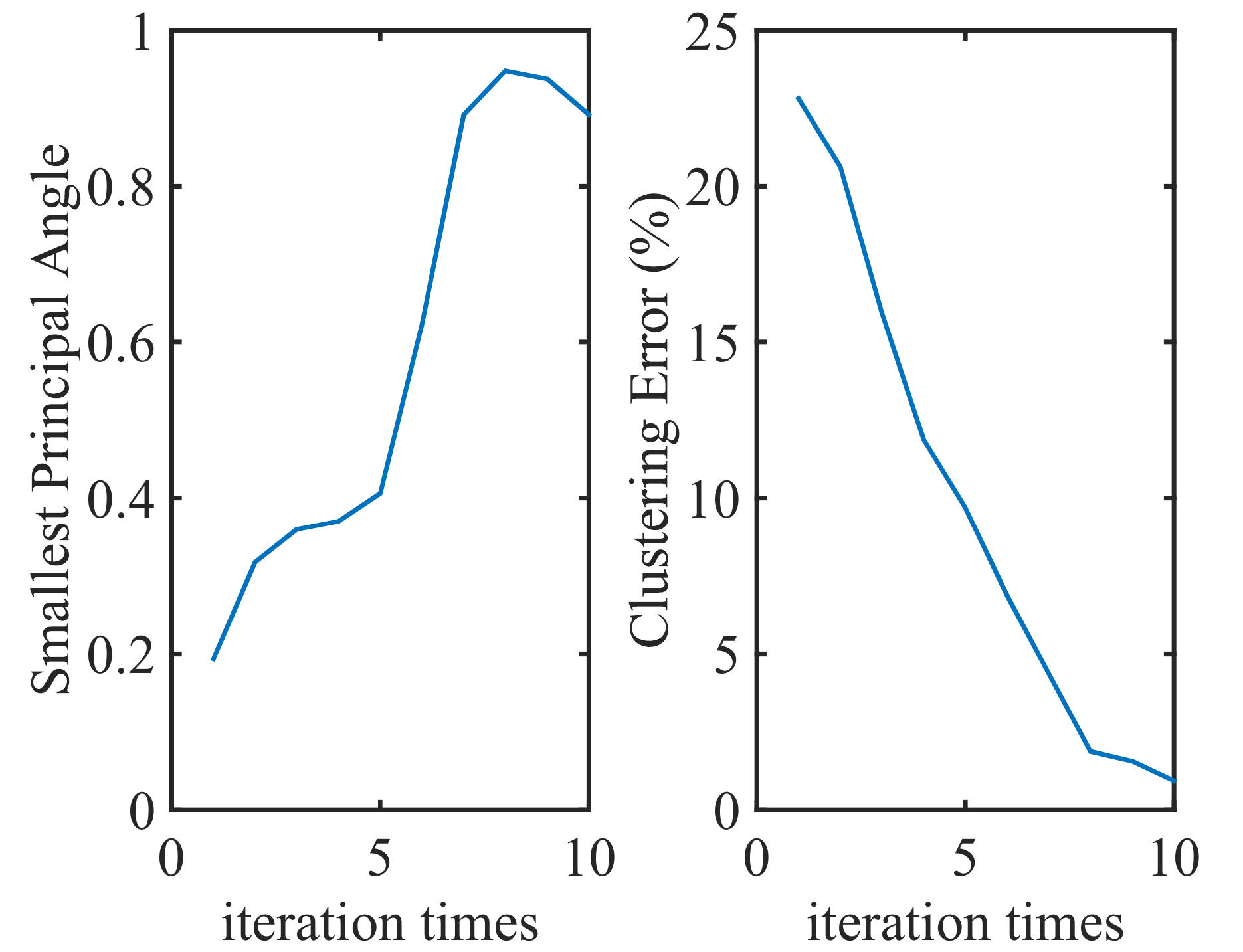}\label{Fig:Face_smallprincipalanglevsmissrate}}
  \caption{Illustration of principal angles $\theta_{i,j}$ and smallest angles v.s. clustering error. (a). principal angles of the original data. (b). principal angles of the transformed data. (c). smallest principal angle v.s. clustering error.}\label{Fig:principalangleplot}
\end{figure*}

\subsection{Frameworks Comparison}
In this subsection, the superiorities of the proposed DTL-FSSC will be demonstrated by comparing with the other state-of-the-art SC approaches on three typical benchmark databases, including motion segmentation, handwritten image clustering, face clustering.
\subsubsection{Motion Segmentation} The task of motion segmentation is one of the typical subspace segmentation applications which aims to segment a sequence of video into multiple spatiotemporal regions representing different types of motions in the scene.  Let $\{\mathbf{f}_{i,j}\in\mathbb{R}^2\}_{j=1}^N$ be a set of $N$ tracked feature points in $i$-th frame of the video and $i=1,\dots,F$ be the frame index. The $j$-th so-called feature trajectory vector $\mathbf{x}_j$ will be obtained by stacking $\mathbf{f}_{i,j}$ from all frames as
\begin{equation}\label{Equ:featuretrajectory}
  \mathbf{x}_j=[\mathbf{f}_{1,j};\cdots ;\mathbf{f}_{F,j}]\in\mathbb{R}^{2F}
\end{equation}
Motion segmentation will separate all $\{\mathbf{x}_j\}_{j=1}^N$ based on their underlying motions and it has been shown that these feature vectors from the same motion will lie in an affine subspace of at most 4 dimension \cite{Boult1991}. As a consequence, different type of motions will reside in different subspace so that the task of motion segmentation will be handled by clustering of the feature vectors in a union of affine subspaces. Hopkins 155 dataset is one of the most popular benchmarks for this application which contains 155 video sequences of two or three motions with checkerboard, traffic and articulated contents \cite{Tron2007}. On average, two motions of 120 sequences with about $N=266$ feature points and $F=30$ frames are collected while $35$ sequence of three motions contain about $N=398$ trajectories and $F=29$ frames. Two sample frames from Hopkins 155 dataset are shown in Figs. \ref{Fig:Hopkins155}.
 \begin{figure}
  \centering
  % Requires \usepackage{graphicx}
   \subfigure[]{\includegraphics[width=0.24\textwidth]{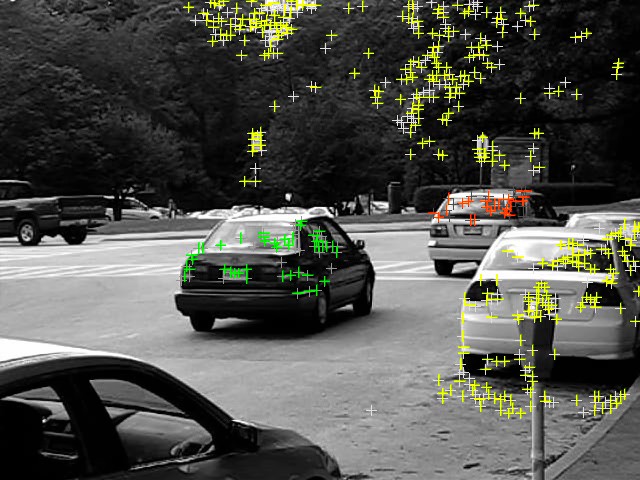}\label{Fig:preview_car}}
  \subfigure[]{\includegraphics[width=0.24\textwidth]{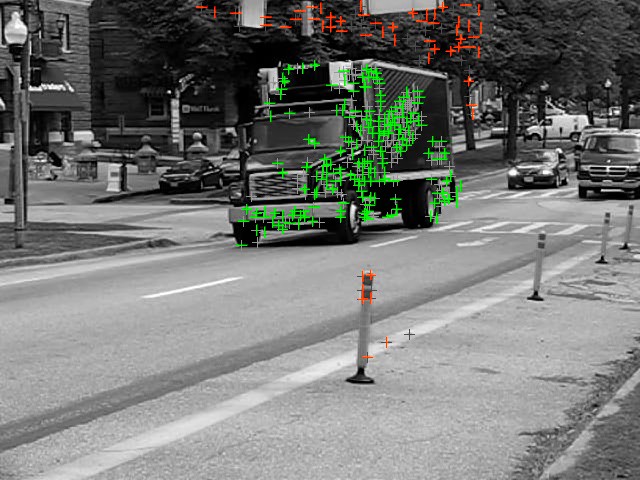}\label{Fig:preview_truck}}
  \caption{Sample frames from Hopkins 155 dataset. (a). car. (b). truck.}\label{Fig:Hopkins155}
\end{figure}
\par In the first group of experiments, we will consider clustering in a subspace with primary dimension, namely $p=2F$, where DTL-FSSC will be compared with the baseline SSC \cite{Elhamifar2013}, LRR \cite{Liu2013}, LRT* \cite{Qiu2014}, SCC \cite{Chen2009} and a typical motion segmentation algorithm, namely LSA \cite{Rao2010Motion}. The comparison results of mean and median values are shown in Table. \ref{Table:MS2Fdimensionresult} and the best performance for each item here and after will be denoted in boldface. For fair comparison, the results listed are obtained from the corresponding papers or reproduced according to the optimal setting declared in the corresponding papers. The postprocessing procedures of LRR and SSC are also taken into account which will greatly improve their performances. In standard LRT, the authors extra involved a RPCA step in each iteration to get rid of some outliers, however, this algorithm in our simulations is left out only to highlight the importance of exploiting $\mathbf{Z}$ and fuzzy $\mathbf{Q}$ during operator learning phase. Therefore, our reported performances will be slightly different from \cite{Qiu2014} denoted by LRT*, but it will not essentially influence our comparison. For our DTL-FSSC, the detailed regularization parameters are configured as following. $\alpha=0.03$, $\beta=0.5$, $\tau=4$, $\lambda=0.05$ and $\tau_1=1$.
\par From the results, we can make a conclusion that DTL-FSSC outperforms the all competitive algorithms in the case of two subspaces clustering, where only $1.8\%$ average error can be achieved. Compared with the baseline SSC providing the initial $\mathbf{Z}$, our proposed DTL-FSSC can further decrease the error by $0.34\%$. For 3 subspaces situation, although LRR outperforms the proposed method in terms of average error rate, DTL-FSSC can still bring out the least error rate in median, which means a more stable performance than others. In this situation, DTL-FSSC decreases the error rate from $5.27\%$ to $4.20\%$ demonstrating the superiorities  of  our framework. Additionally, the performances of LRT* are not satisfactory than others on this database, which can be inferred that their discriminative regularization will not robust to the outliers when we leave out the RPCA refinement.
\begin{table}
\caption{Clustering Error (\%) on the Hopkins 155 Dataset in the 2F-Dimensional Subspace\label{Table:MS2Fdimensionresult}}\centering
\begin{tabular}{|c|c|c|c|c|c|c|c|}
\hline
K& Alg & LSA & SCC & LRR & LRT* & SSC & DTL-FSSC\tabularnewline
\hline
\hline
\multirow{2}{*}{2} & mean & 4.23 & 2.89 & 2.21 & 9.61 & 2.14 & \textbf{1.80}\tabularnewline
\cline{2-8}
 & median & 0.56 & \textbf{0.00} & \textbf{0.00} & 2.92 & \textbf{0.00} & \textbf{0.00}\tabularnewline
\hline
\multirow{2}{*}{3} & mean & 7.02 & 8.25 & \textbf{3.84} &17.90 & 5.27 & {4.20}\tabularnewline
\cline{2-8}
 & median & 1.45 & 0.24 & 1.43 & 18.13 & 0.56 &\textbf{0.21} \tabularnewline
\hline

\end{tabular}
\end{table}

\par In the next group of experiments, clustering performance will be evaluated in some low dimensional feature domains with different dimensions, namely $p=\{2K,~4K,~6K,~8K,10K\}$. Except for the algorithms compared above, the performances of two baseline methods, namely LS3C and NLS3C equipped with polynomial kernel will be further concerned. In particular, for SSC and LRR, we will firstly construct a random matrix $\mathbf{A}$ drawn from normal distribution, and then perform clustering with the projections $\mathbf{A}\mathbf{X}$. We plot the comparison results in Figs. \ref{Fig:Hopkins155_result}. Inspecting the figures, we can conclude that DTL-FSSC will outperform all compared algorithms in most cases and the clustering errors obviously decrease. It is more worth noting that the results of our algorithms are relatively robust to the variation of dimensions, which will mainly own to the structure preservation in the feature domain. Seeing from the results that the compared benchmark algorithms of LS3C and NLS3C do not appear superior performance because no discriminative strategies have been explicitly involved in their frameworks. In summary, the proposed DTL-FSSC can improve the performance of its baseline SSC and bring about at least competitive results than other state-of-the-arts in different situations on Hopkins 155 database.

 \begin{figure}
  \centering
  % Requires \usepackage{graphicx}
   \subfigure[]{\includegraphics[width=0.24\textwidth]{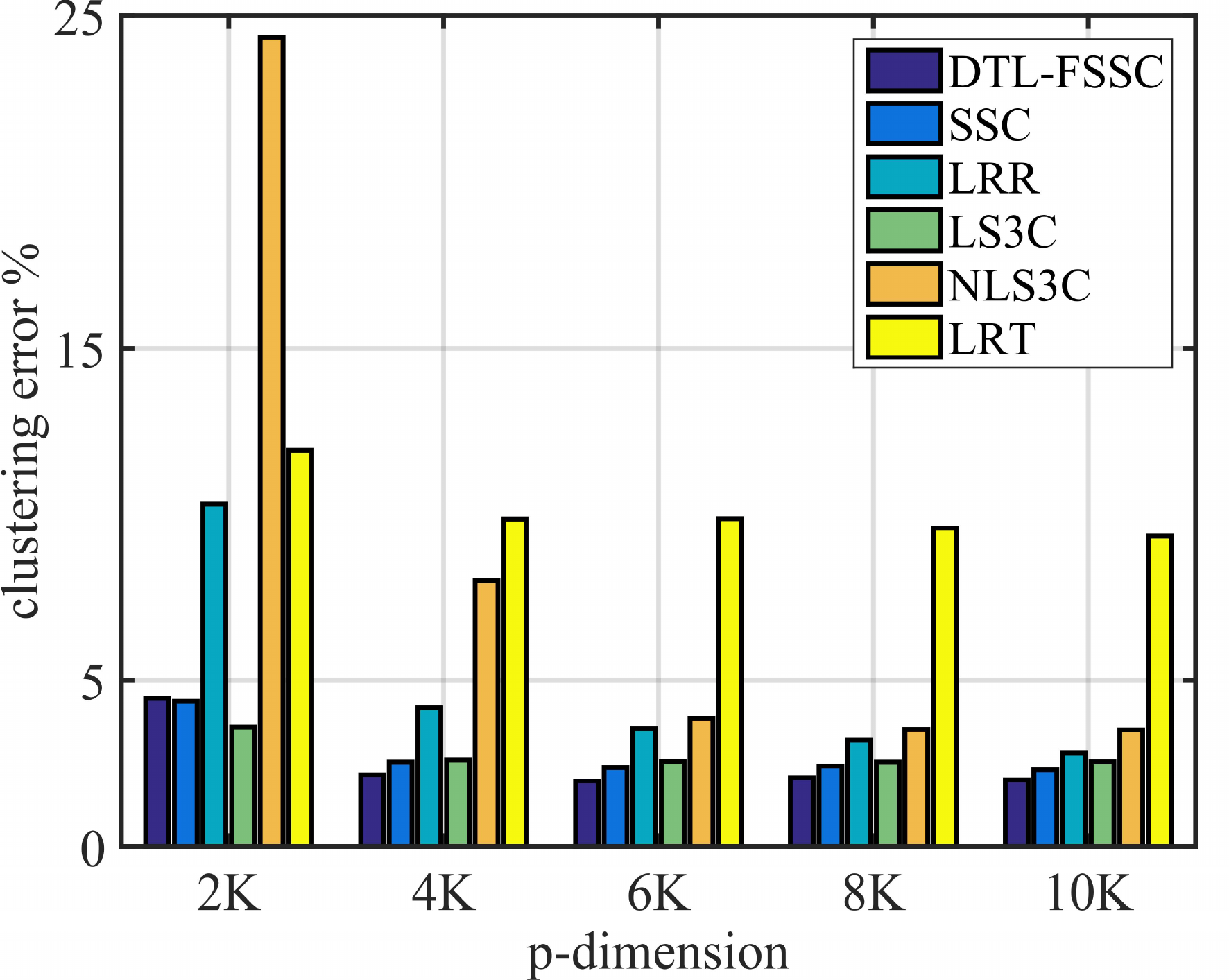}\label{Fig:Hopkins155_varyingdimension_rate}}
  \subfigure[]{\includegraphics[width=0.24\textwidth]{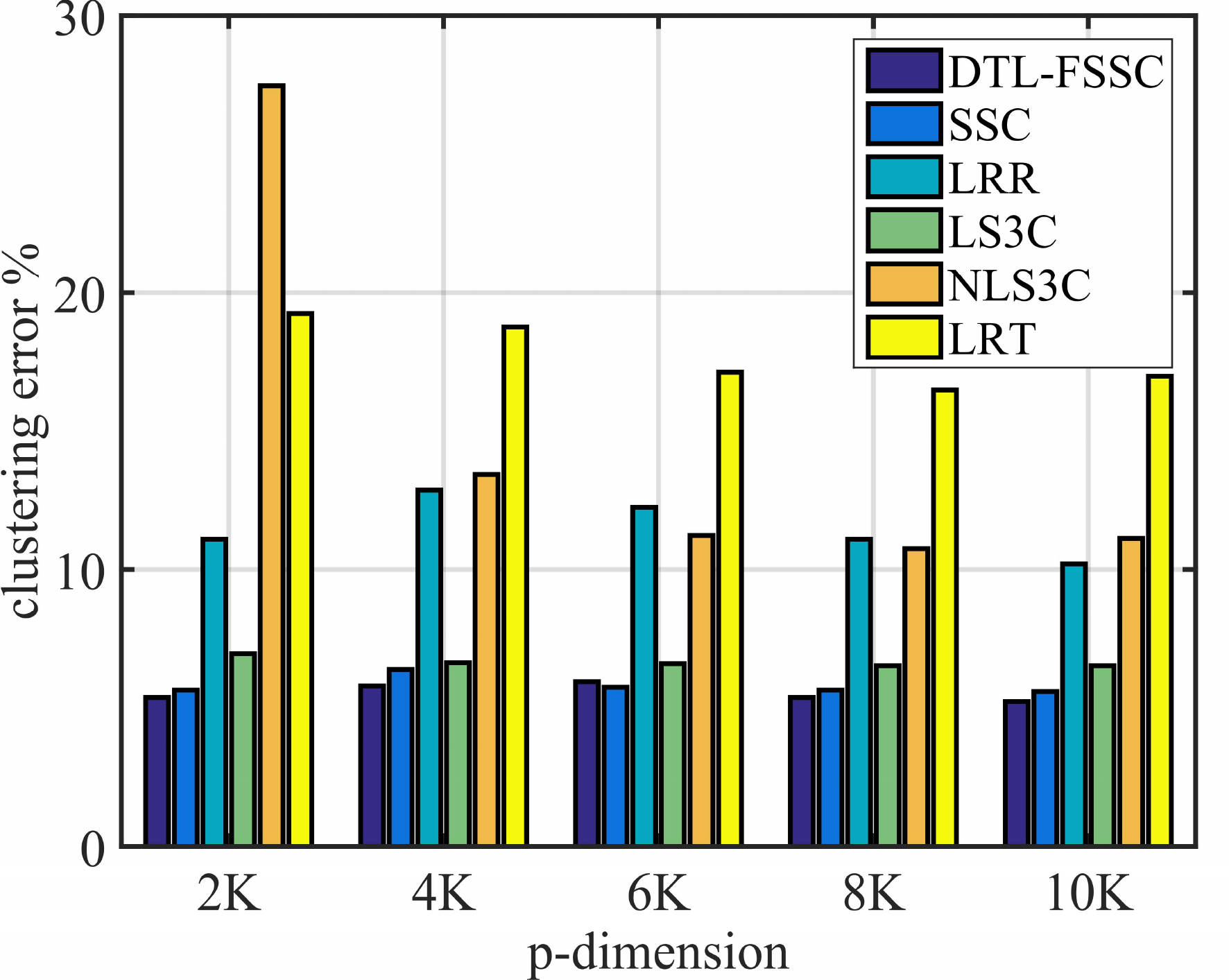}\label{Fig:Hopkins155_varyingdimension_rate3}}
  \caption{Clustering Error v.s. $p$ for different methods. (a). $K=2$. (b). $K=3$.}\label{Fig:Hopkins155_result}
\end{figure}

\subsubsection{Digital Handwritten Clustering}
In this part, we will evaluate the clustering performance on a more difficult database with the increased amount of subspaces as well as the primary dimension, namely BAD. This database comprises 36 classes of handwritten images with the size of $20\times 16$, including $0\thicksim9$ digit and $A\thicksim Z$ alphabet. Each class contains 39 images in total and some sample images are illustrated in Figs. \ref{Fig:sampleimage_digit}.
 \begin{figure}
  \centering
  % Requires \usepackage{graphicx}
   \subfigure[]{\includegraphics[width=0.24\textwidth]{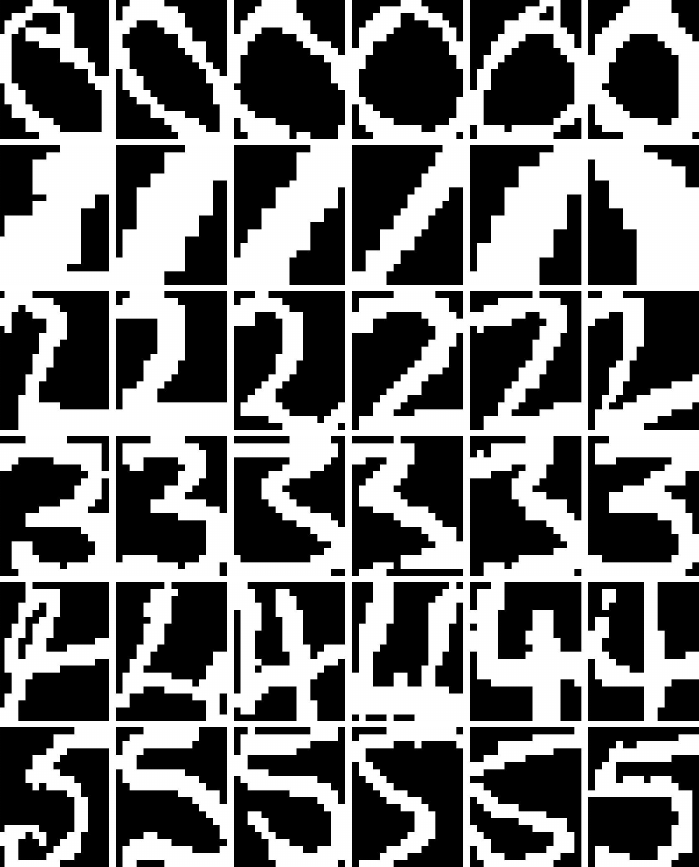}\label{Fig:SampleImage_Dig_number}}
  \subfigure[]{\includegraphics[width=0.24\textwidth]{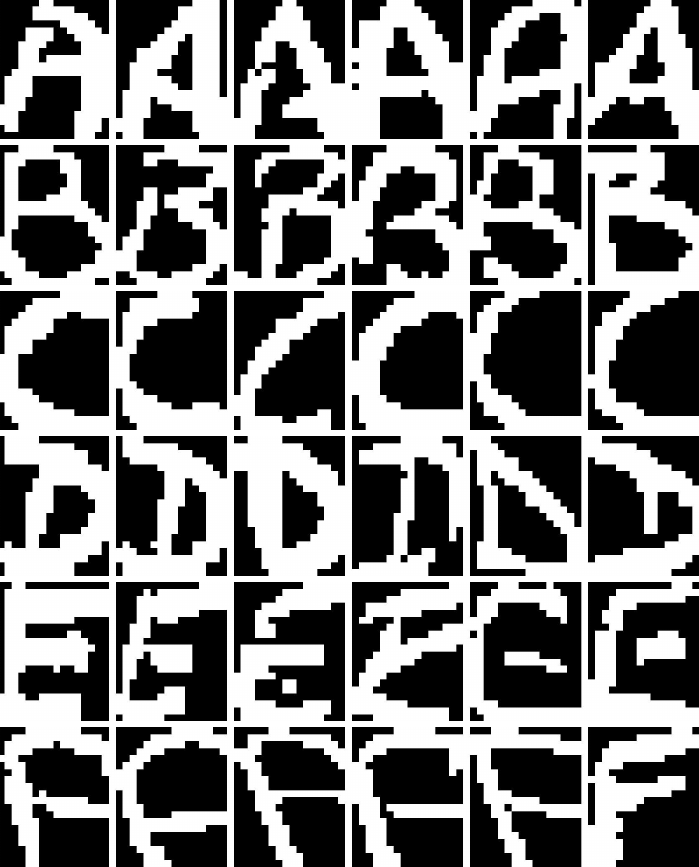}\label{Fig:SampleImage_Dig_alphabet}}
  \caption{Sample images from BAD. (a). digital number $0\sim5$ . (b). alphabet $A\sim F$.}\label{Fig:sampleimage_digit}
\end{figure}

\par To evaluate the effect of the increased number of subspaces, we consider the choice of $K\in\{3,5,10\}$ for clustering. For these choices, we randomly select $K$ out of 36 classes and perform 500, 200 and 100 independent experiments to report the average and median clustering errors, respectively. Moreover, following the previous validation, two different dimensional feature domains will be considered, namely $p=n$ and $p=10K$. The corresponding results are summarized in Table \ref{Table:DigPrimarydimensionresult} and \ref{Table:Dig10Kdimensionresult}, respectively. The parameters in LRR and SSC will be set as $0.7$ and $5$ according to our validations, respectively. The parameters in DTL-FSSC will be set as $\alpha=0.07$, $\beta=0.01$, $\tau=8$, $\lambda=0.07$ and $\tau_1=0.07$.
\par Let us firstly check the clustering performance in the primary dimensional space shown in Table \ref{Table:DigPrimarydimensionresult}. At the first sight, we can observe that the propose method significantly outperforms all compared approaches for all clustering numbers in terms of mean and median clustering error, which clearly demonstrates its superiorities. Note also in the case of $K=5$ that DTL-FSSC decreases the average error from the result of SSC $23.12\%$ to $15.76\%$, which will be about $7\%$ enhancement on average. Considering the median value, $12.82\%$ of DTL-FSSC is more than $10\%$ enhancement than that of SSC. For LRT, its performances on this database can be much better than those of the previous hopkins 155 database. It also reduces the clustering error by a large margin and outperforms other algorithms in the most cases.
\begin{table}
\caption{Clustering Error (\%) on the Binary Alphadigits Dataset in $n$-Dimensional Subspace\label{Table:DigPrimarydimensionresult}}\centering
\begin{tabular}{|c|c|c|c|c|c|c|c|}
\hline
K& Alg & LRT* & LSA&NLS3C & LRR & SSC & DTL-FSSC\tabularnewline
\hline
\hline
\multirow{2}{*}{3} & mean & 10.89 & 22.69&12.85 & 13.17 & 10.91& \textbf{8.40} \tabularnewline
\cline{2-8}
 & med. & 5.98 &22.22& 7.69 & 10.26 & 6.84 & \textbf{4.27}  \tabularnewline
\hline
\multirow{2}{*}{5} & mean & 20.26 &33.81& 22.64 & 23.52 & 22.68 & \textbf{15.76} \tabularnewline
\cline{2-8}
 & med. & 17.44&33.85& 23.08 & 23.59 & 23.08 & \textbf{12.82}  \tabularnewline
\hline
\multirow{2}{*}{10} & mean & 33.61 &42.65& 33.85& 32.62 &35.14 & \textbf{31.27}  \tabularnewline
\cline{2-8}
 & med. & 33.97 &41.28 &34.36 & 32.44 &31.90 & \textbf{31.28} \tabularnewline
\hline
\end{tabular}
\end{table}
\par Considering their performances in the $10K$-dimensional feature domain, although NLS3C achieves better results than DTL-FSSC in the case of $K=3$, DTL-FSSC can also bring about lower average errors in the case of $K=5$ and $K=10$. More importantly, DTL-FSSC merely exploits a linear operator while NLS3C is equipped with a nonlinear kernel, which can further establish the superiority of our framework. Note also from the results of SSC, DTL-FSSC again improves its performance in a large margin. For LRT*, we may observe that its performance drops rapidly by comparing \ref{Table:DigPrimarydimensionresult} and \ref{Table:Dig10Kdimensionresult}. This is mainly caused by information loss of a rank-defective operator on this database.
\begin{table}
\caption{Clustering Error (\%) on the Binary Alphadigits Dataset in $10K$-Dimensional Subspace\label{Table:Dig10Kdimensionresult}}\centering
\begin{tabular}{|c|c|c|c|c|c|c|c|}
\hline
K& Alg & LRT* & LS3C&NLS3C & LRR & SSC & DTL-FSSC\tabularnewline
\hline
\hline
\multirow{2}{*}{3} & mean & 22.69 & 17.37&\textbf{14.11} & 40.01 & 21.81& 19.96 \tabularnewline
\cline{2-8}
 & med. & 22.22 &12.82& \textbf{8.55} &41.03 & 17.95 & 11.97 \tabularnewline
\hline
\multirow{2}{*}{5} & mean & 29.74 &26.75& 25.03 & 38.76 & 29.77 & \textbf{21.62} \tabularnewline
\cline{2-8}
 & med. & 29.19 &24.10& 23.85 & 38.97 & 31.28 & \textbf{20.00}  \tabularnewline
\hline
\multirow{2}{*}{10} & mean & 40.99 &38.81& {34.05}& 48.26 &40.79 & \textbf{34.02}  \tabularnewline
\cline{2-8}
 & med. & 39.87 &38.59 &\textbf{32.56} & 48.21 &38.33 & 33.90 \tabularnewline
\hline
\end{tabular}
\end{table}

\subsubsection{Face Clustering}
\par Finally, we come to validate the proposed framework on EYB database for face clustering, which is much more sophisticated than the previous ones due to high ambient dimension and illumination corruptions. There are 64 images in EYB with the size of $192\times 168$ and the total number of classes are 38. As a common setting, these images will be firstly resized as $48\times 42$ and some sample images from the first two classes are illustrated in Figs. \ref{Fig:sampleimage_face}.
 \begin{figure}
  \centering
  % Requires \usepackage{graphicx}
   \subfigure[]{\includegraphics[width=0.24\textwidth]{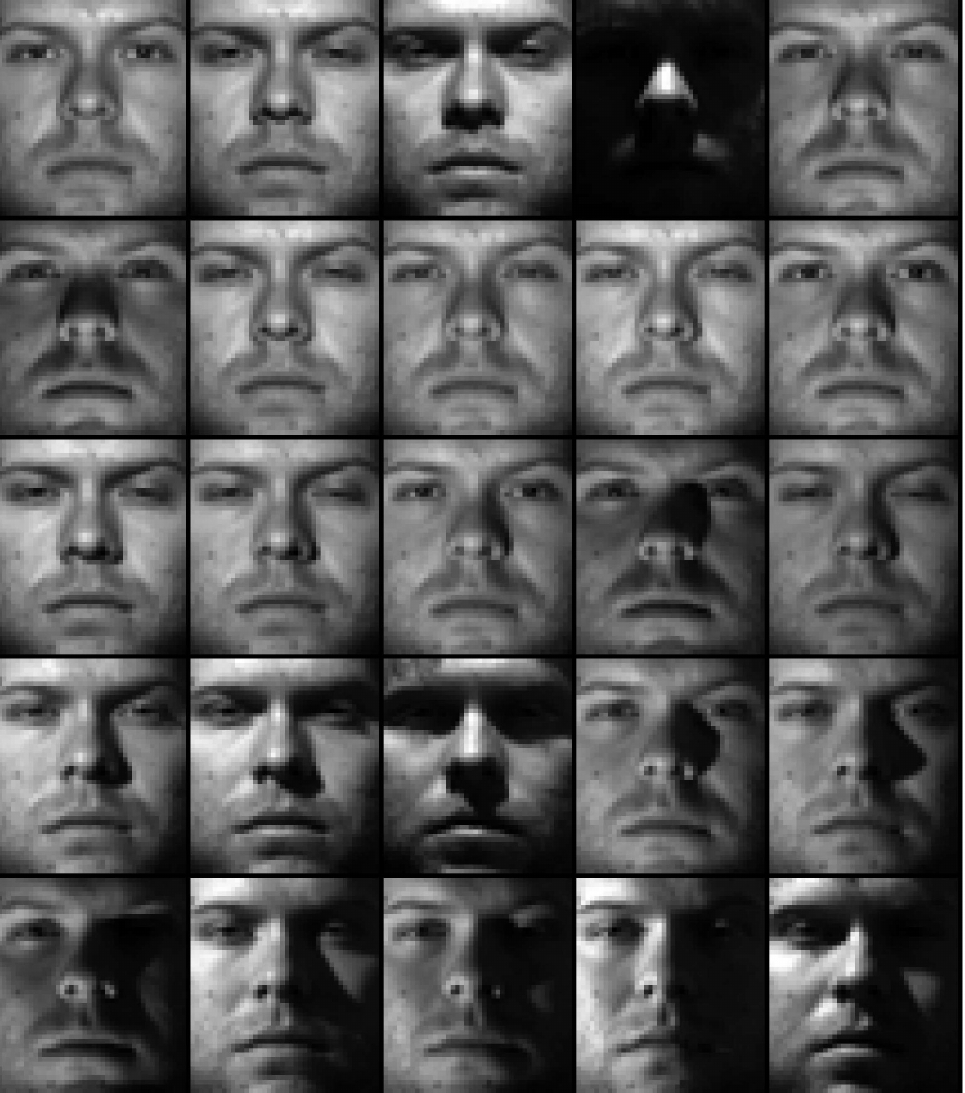}\label{Fig:EYaleB_Image1}}
  \subfigure[]{\includegraphics[width=0.24\textwidth]{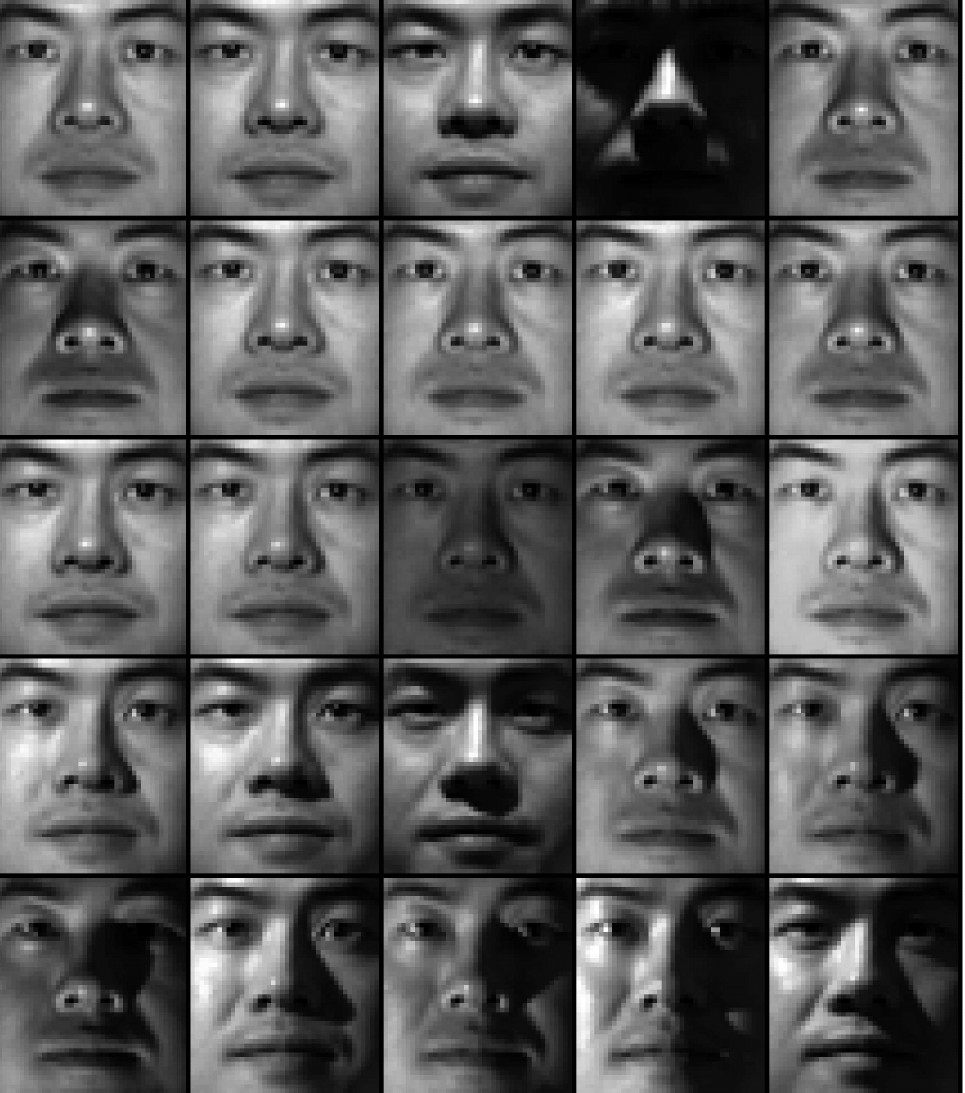}\label{Fig:EYaleB_Image2}}
  \caption{Sample images from two classes in EYB.}\label{Fig:sampleimage_face}
\end{figure}
\par For this database, we will consider to clustering in a $p=N$ dimensional feature domain with the choice of $K\in\{3,5,10\}$. Similar to the previous setting, 500, 200 and 100 independent experiments will be carried out for different choices of $K$. The parameters in SSC and LRR will be set as 20 and 1.8, respectively. The comparison results are summarized in Table. \ref{Table:FaceNdimensionresult}.
\begin{table}
\caption{Clustering Error (\%) on E-YaleB Dataset in the $N$-Dimensional Subspace\label{Table:FaceNdimensionresult}}\centering
\begin{tabular}{|c|c|c|c|c|c|c|c|}
\hline
K& Alg & LRT* & LS3C&NLS3C & LRR & SSC & DTL-FSSC\tabularnewline
\hline
\hline
\multirow{2}{*}{3} & mean & 5.83 &24.78&37.34 & 15.72 & 17.25& \textbf{1.93} \tabularnewline
\cline{2-8}
 & med. & 3.39 &24.48& 42.71 & 13.02 & 14.06 & \textbf{1.04}  \tabularnewline
\hline
\multirow{2}{*}{5} & mean & 3.69 &31.80& 51.14 & 15.33 & 27.47 & \textbf{3.22} \tabularnewline
\cline{2-8}
 & med. & 3.44 &31.25& 50.94 & 13.75& 24.84 & \textbf{2.19}  \tabularnewline
\hline
\multirow{2}{*}{10} & mean & 5.87 &35.89& 53.14& 17.03 &38.94 & \textbf{5.02}  \tabularnewline
\cline{2-8}
 & med. & 5.16 &35.08 &53.13& 16.56 &40.31 & \textbf{2.81} \tabularnewline
\hline
\end{tabular}
\end{table}
\par On this database, we can conclude from the results that both DTL-FSSC and LRT* have significantly improved the clustering performances than other comparison methods, and DTL-FSSC can still achieve the lower error rates than LRT*. Specifically, DTL-FSSC decreases the total clustering errors by almost $28.14\%$ than SSC.

\section{Conclusion}
In this paper, we develop a general DTL-FSSC framework of discriminative subspace clustering as well as dimensionality reduction. The novelty and innovation of the presented framework come from the following two folds. On one hand, DTL-FSSC enables us to learn a linear discriminative transformation and subspace clustering in a united framework, in which a fuzzy label matrix is specifically involved to enhance the discrimination and robustness. On the other hand, extensive experimental results show that DTL-FSSC can bring about significant improvements on clustering performances of three applications than other compared state-of-the-art frameworks, which will pay the price of the acceptable computational complexity.
\par To conclude this paper, we will present some promising and important issues as our ongoing and future researches. Firstly, the task of subspace clustering is essentially characterizing data distribution $p(\mathbf{X})$ with a suitable model. In the current frameworks, an approximation strategies indicated in \eqref{Equ:LowboundData} is exploited, where a shallow self-expressiveness linear synthesis model with a single regularized latent layer is taken into account to capture the subspace structures. With the prevalent strategies of deep learning \cite{LeCun2015}, a hierarchical structured model with multiple hidden layers will benefit from its more powerful capacity of capturing some high-order relationships among data. Therefore, exploiting some parametric deep generative models such as variational autoencoder \cite{Kingma2014} or generative adversarial networks \cite{Goodfellow2014}, to straightforward handle $p(\mathbf{X})$ will be preferred to receive a potentially better clustering performance. Another possible direction focuses on the transformation operator $\mathcal{A}:\mathcal{X}\rightarrow \mathcal{F}$ which can be also designed as a nonlinear deep neural network instead of the linear operator to enhance its capacity.
\label{sec:Conclusion}

\bibliographystyle{IEEEtran}
\bibliography{IEEEabrv,egbib}

% that's all folks
\end{document}